\documentclass[preprint,1p,times,AutoFakeBold]{elsarticle}
\usepackage{caption}
\usepackage{lineno,hyperref}
\usepackage{graphicx}
\usepackage{ulem}
\usepackage{color}
\usepackage{epstopdf}
\modulolinenumbers[5]
\usepackage{multirow}
\usepackage{float}
\journal{Journal of \LaTeX\ Templates}
\usepackage{amsmath}
\usepackage{subfigure}
\usepackage[linesnumbered,ruled,vlined]{algorithm2e}

\usepackage{algpseudocode}
\usepackage{bm}
\usepackage{ntheorem}

\newtheorem{definition}{Definition} 
\newtheorem{lemma}{Lemma}
\newtheorem{example}{Example}

\newtheorem{theorem}{Theorem}
\newtheorem*{proof}{Proof}

\newcommand{\tabincell}[2]{\begin{tabular}{@{}#1@{}}#2\end{tabular}}

\bibliographystyle{plain}

\begin{document}

\begin{frontmatter}

\title{PSpan: Mining Frequent Subnets of Petri Nets\tnoteref{mytitlenote}}



\author[2,1]{Ruqian Lu \corref{cor1}}
\ead{rqlu@math.ac.cn}
\cortext[cor1]{Corresponding author}
\address[2]{Academy of Mathematics and Systems Science Key Lab of MADIS Chineses Academy Science}

\author[1,3]{Shuhan Zhang}
\ead{zhangshuhan@ict.ac.cn}
\address[1]{Key Laboratory of Intelligent Information Processing of Chinese Academy of Sciences,\\ Institute of Computing Technology, Chinese Academy of Sciences}
\address[3]{University of Chinese Academy of Sciences}

\begin{abstract}
This paper proposes for the first time an algorithm PSpan for mining frequent complete subnets from a set of Petri nets. We introduced the concept of complete subnets and the net graph representation. PSpan transforms Petri nets in net graphs and performs sub-net graph mining on them, then transforms the results back to frequent subnets. PSpan follows the pattern growth approach and has similar complexity like gSpan in graph mining. Experiments have been done to confirm PSpan’s reliability and complexity. Besides C/E nets, it applies also to a set of other Petri net subclasses.

\end{abstract}

\begin{keyword}
Petri net mining, frequent subnet mining, PSpan algorithm, net graph, complete subnet, gSpan algorithm, DFS code
\end{keyword}

\end{frontmatter}

\section{Introduction} \label{sec:intro}

Frequent subgraph mining (FSM) is a well-studied subject in data mining \cite{kuramochi_frequent_2001}. FSM has been successfully applied to analyze protein-protein interaction networks \cite{elseidy_grami:_2014}, chemical compounds \cite{inokuchi_apriori-based_2000}, social networks \cite{leung_exploring_2010}, and mine workflow nets from large sets of log data of processes \cite{greco_mining_2005,greco_mining_2006,chapela-campa_towards_2017}. Petri nets have been extensively used to model and analyze concurrent and distributed systems \cite{murata_petri_1989,lin_properties_2001,desel_what_2001}. Despite the success of FSM in graphs, little effort has been spent on applying FSM in Petri nets that have complicated topological structures and semantics. 

This paper is devoted to the mining of frequent subnets from sets of Petri nets efficiently. A Petri net is different from a graph in three main aspects: (1) Topological structures. A graph is a net consisting of a set of nodes and another set of edges connecting the nodes, while a typical Petri net consists of two sets of different kinds of nodes and a set of directed arcs. Roughly speaking, we call the first kind of nodes the state nodes and the second kind the action nodes. (2) Connections. There are only arcs between different kinds of nodes. The set of all arcs form the flow relation. This problem makes the structure of Petri nets even more complicated than graphs and is the main source of high complexity in net information processing \cite{yuan_application_2013}. (3) Semantics. Unlike the graphs with no semantics (or, which can be assigned any appropriate semantics), Petri nets have built-in semantics. This characteristic can be seen from their names, e.g., condition/event nets, place/transition nets. Therefore a Petri net cannot be decomposed or reconfigured at will. We call this kind of semantics the integrity principle. Prof. Petri himself has pointed out in a lecture \cite{petri_concepts_1973,peterson_petri_1977} that violating this principle would lead to meaningless results. As an example, he mentioned the process of bike assembling. It can be modeled by a Petri net, see figure ~\ref{bike11}, where front-wheel + bike frame + rear-wheel is assembled to a complete bike. However, figure \ref{bike22} (assembling a bike with one wheel only) is not a meaningful subnet of it (though, we must exclude the case of a circus). 

\begin{figure}[htbp]
	\centering
	\subfigure[A bike assembling process (net)]{

		\includegraphics[width=0.4\textwidth]{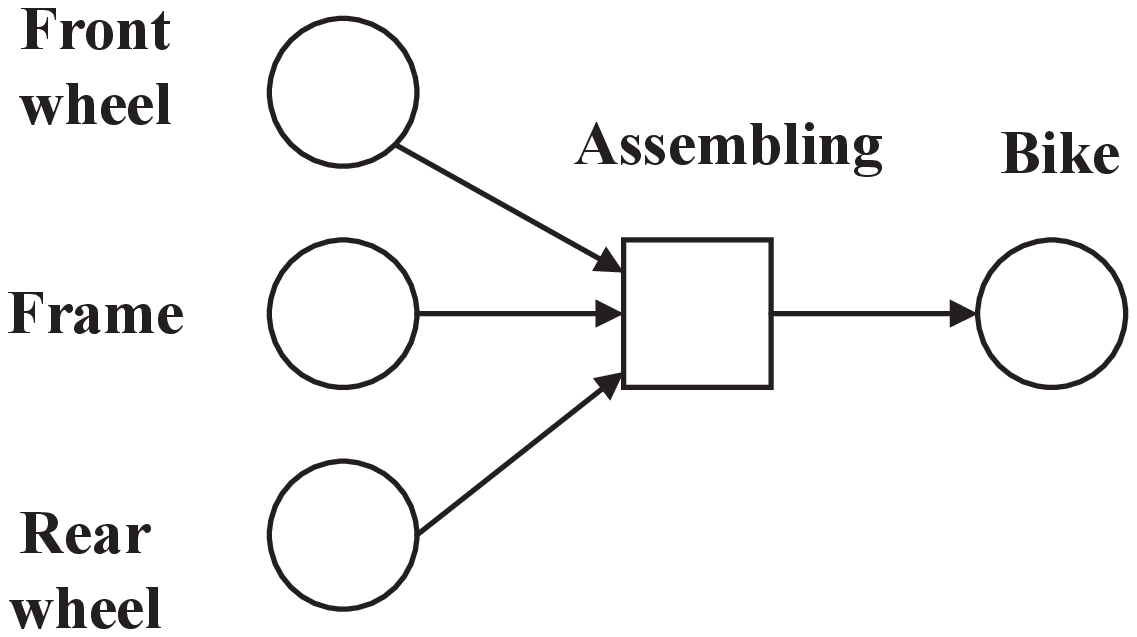}
	\label{bike11}}
	\hspace{0.2in}
	\subfigure[What's it? An incomplete subnet]{
		\includegraphics[width=0.4\textwidth]{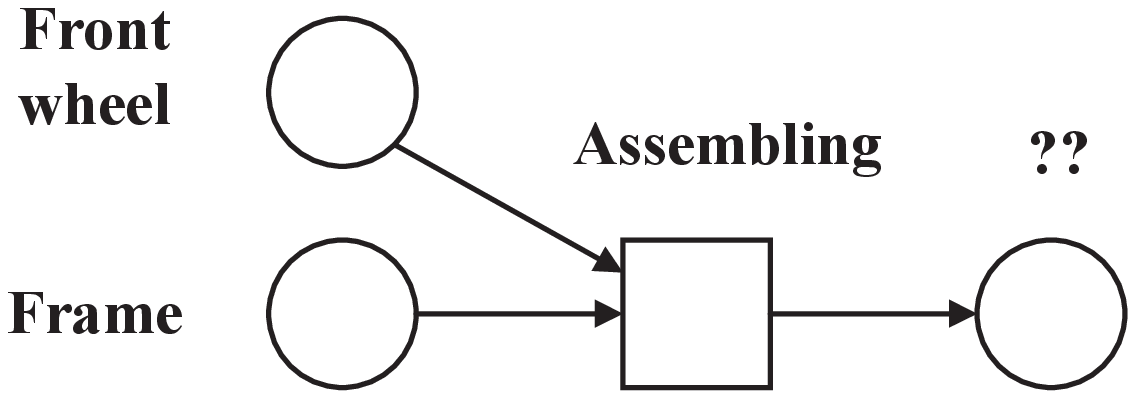}\label{bike22}}

	\caption{Semantic inadequacy of incomplete subnets}
		\label{new bike}
\end{figure}

The conclusion we draw from the integrity principle is the criterion of module oriented frequent subnet mining. For the Petri nets, a basic module is an action node $e$ together with all state nodes connected to $e$, each by one arc only. We call such a module or the connections of several such modules a complete subnet. Only complete subnets are eligible to be mined as frequent subnets. For details, see definition \ref{net}.

By designing an algorithm for subnet mining, our first concern is how to lower down the complexity of Petri net structures \cite{lu_petri_1994}. To do that, we transform a Petri net into a form called a net graph, N.G. for short. N.G. is more similar to a planar graph \cite{garey1974some}, with the advantage of a high compression ratio of node and edge numbers. A net graph has only one kind of nodes, which correspond to the action nodes of Petri nets. These nodes are connected by edges. The information of Petri nets’ state nodes and their connection arcs with action nodes is absorbed into the tagging of net graphs’ nodes and edges. We have designed and implemented a frequent sub-net graph mining algorithm called PSpan, a modification of the gSpan algorithm \cite{xifeng_yan_gspan:_2002}. The mined frequent sub-net graphs will be transformed back to frequent complete subnets of Petri nets. We use a very basic model of Petri net—C/E net (Condition/Event net) in the whole process. Section \ref{sec:extension} shows that our PSpan algorithm is applicable to quite a few other Petri net models for frequent subnets mining without loss of generality. To verify our method, we use a probabilistic algorithm to generate a large set of C/E nets randomly. The results have been satisfactory.

In the following, we first list some basic definitions.

\begin{definition}[Net]\label{net}
	A \textbf{C/E net} ( simply a \textbf{net}) is a tuple $(C,E;F)$ where:\\
	1. $C$ is a finite set, whose elements are called conditions;\\
	2. $E$ is a finite set, whose elements are called events satisfying $C \cap E = \emptyset$ and $C \cup E \not= \emptyset$;\\
	3. $F \subseteq (C \times E) \cup (E \times C)$ is the set of directed arcs, called flow relations;\\
	4. $\bullet x=\{y|(y,x) \in F\}$ is called the pre-set of $x$, $x \bullet=\{y|(x,y) \in F\}$ is called the post-set of $x$, here $x \in C \cup E$.
	
\end{definition}

$\hfill \square$

\begin{definition}[Pure net]
	Let $K=(C,E;F)$ be a net, we call $K$ a pure net, if $K$ does not have a self-loop, i.e., there is no $a,b$, such that $\{(a,b),(b,a)\}\subseteq F$.

\end{definition}

$\hfill \square$
\begin{definition}[Connected net]
	Given a net $N=(C,E;F), \forall a,b \in C \cup E$, we call:\\
	1. $(a,b)$ is connected iff $(a,b) \in (F \cup F^{-1})^+$ holds, where $(b,a) \in F^{-1}$ iff $(a,b) \in F$, and $R^+$ denotes the transitive closure of relation set $R$;\\
	2. $(a,b)$ is strongly connected iff $(a,b) \in aF^+b$, i.e., there exists a directed path leading from $a$ to $b$; \\
	3.  $N$ is connected (strongly connected) iff $\forall a,b \in C \cup E, (a,b)$ is connected (strongly connected).
\end{definition}
$\hfill \square$
\begin{definition}[Complete subnet]
	Given two nets $N_1=(C_1,E_1;F_1)$ ; $N_2=(C_2,E_2;F_2)$, $N_2$ is a complete subnet of  $N_1$ iff the following limitations hold.\\
	1. $C_2 \subseteq C_1, E_2 \subseteq E_1, F_2 = F_1 \cap ((C_2\times E_2)\cup (E_2 \times C_2))$;\\
	2. $\forall e \in E_2,\{(e,b),(c,e)\} \subseteq F_1 \to \{(e,b),(c,e)\}\subseteq F_2 \,and\, \{b,c\}\subseteq C_1$.\\
Note that if $|E_2|=n$, and $N_2$ is connected, we call $N_2$ an $n$-complete subnet of $N_1$.	
\end{definition}

$\hfill \square$
\par

\begin{definition}[Net isomorphism] \label{iso}
	Given two nets $N_1=(C_1,E_1;F_1)$ and $N_2=(C_2,E_2;F_2)$.  An isomorphism between C/E nets $N_1$ and $N_2$ means there is a mapping $m:N_1 \to N_2$, which is bijective in $C_1 \to C_2, E_1 \to E_2$ and $F_1\to F_2$ respectively, where arc $(x,y) \in F_1$ iff arc $(m(x),m(y))\in F_2$.\\
	In this case, we say $N_1$ is $m$-isomorphic to $N_2$, or $N_2$ is $m^{-1}$-isomorphic to $N_1$.
\end{definition}

$\hfill \square$

\par
The remainder of the paper is arranged as follows: In Section~\ref{sec:related}, we review related works, including process mining techniques, FSM algorithms, and their applications on workflow nets mining. Section ~\ref{sec:preliminaries} introduces the net graph representation of pure C/E nets and presents our PSpan algorithm. In particular, we describe details on transforming net representation into N.G. form, and on a minimal-depth-first search strategy of constructing minimal DFS codes of N.G.. The PSpan algorithm will be presented in details. A complexity analysis of PSpan is illustrated in section \ref{complexity}. In section ~\ref{sec:experiment}, we perform experiments on subnet mining with the PSpan algorithm on the generated pure C/E nets reservoir and compare PSpan’s complexity with an ideal experimental algorithm DSpan. In section~\ref{sec:extension}, we discuss the extension of our PSpan algorithms to other subclasses of Petri nets. Finally, Section ~\ref{sec:Conculsion} closes this paper with a summary.

\section{Related Works} \label{sec:related}

In this section, we review shortly three areas of related works. (1) Existing process mining algorithms that focus on mining workflow nets from process data; (2) Frequent subgraph mining (FSM) algorithms; (3) Applications of using FSM on frequent workflow nets mining.
%

\subsection{About Process Mining}
Process mining is to mine process patterns from a large set of log data. The models for the mined processes may be any concurrency structures, mostly different types of Petri nets. From the 1960s, Petri nets have been extensively used to model and analyze concurrent and distributed systems \cite{petri_kommunikation_1962,_three-layer_2007,silva_half_2013}.
There are two main aspects of Petri net mining from log data: ordinary Petri net mining and workflow net mining. Tapia-Flores et al. \cite{tapiaflores_discovering_2018} proposed Safe Interpreted Petri net mining by identifying the casual occurrence relations in logs for computing the T\_invariants. However, the input/output conditions in this kind of Petri nets are subject to some limitations (i.e., 1-bound Petri net). The most commonly used technique is the automatic process mining approach, which focuses on mining workflow nets (a special kind of Petri net) from process data \cite{aalst_workflow_2004,wen_mining_2007,wen_novel_2009,vander_aalst_decomposing_2013}. The goal of mining workflow nets from process data is to utilize the causal information collected at the system running to derive a single-entry and single-exit workflow net. This technique is motivated by data mining and business process management (BPM) discipline. There are two main approaches:

(1)Rule-based matching approach. The most popular rule-based approach is $\alpha$-algorithm \cite{aalst_workflow_2004}. It distinguishes four log-based ordering relations, including direct successions, causality dependencies, parallel dependencies and irrelevant. According to these four relations, $\alpha$-algorithm extracts the directly-follows graphs from event logs first and then transforms them into workflow nets that preserve some specific properties (i.e., soundness). Some commercial and open-source tools, such as the ProM \cite{van_der_aalst_prom:_2009}, Disco \footnote{http://fluxicon.com/disco/}, and Celonis \footnote{https://www.celonis.com/intelligent-business-cloud/process-discovery/}, have been commonly used. However, some spaghetti-like workflow nets are generated from a large number of process data.

(2) Sequential pattern mining (SPM)-based approach. SPM techniques have been successfully integrated into workflow nets mining from process data
 \cite{chapela-campa_discovering_2017,dalmas_heuristic_2018,diamantini_pattern_2013,leemans_discovery_2015,tax_mining_2016,tax_localprocessmodeldiscovery:_2018}. Leeamans et al. \cite{leemans_discovery_2015} proposed the Episode Discovery algorithm to discover frequent episodes that are partially ordered sequential data. A formal method called Local Process Model has been proposed to detect the local patterns such as concurrency and choice ones in process data \cite{tax_mining_2016}. As one can easily see, the sources of both rule-based matching approaches and SPM-based approaches are sequential log data. One of the significant differences between all other approaches and ours is that we mine frequent subnets from a large set of Petri nets. Figure ~\ref{1} illustrates the two main methods of mining workflow nets from process data. Note that the label “B 50/85” means there exist 85 event logs containing B, while only 50 event logs ($<A,B,C>^{20},<A,B,D>^{30}$) can replay the workflow net in Figure ~\ref{fig:subfig:1c}.
 
  All works mentioned above relate to process mining or Petri net mining, in particular workflow mining, from log data. One of the significant differences between all these works and ours is that we mine frequent subnets from a large set of Petri nets.
 
\begin{figure}[H]
	\subfigure[The event logs]{
		\begin{minipage}[c]{.3\linewidth}
			\centering
			\label{fig:subfig:1a}
			\begin{tabular}{c}
				\hline
				Event logs\\
				\hline
				$<A,B,C>^{20}$ \\ 
				$<A,B,D>^{30}$ \\ 
				$<B,C>^{15}$ \\
				$<B,D>^{10}$ \\
				$<B>^{10}$ \\
				$<C>^{30}$ \\
				$<D>^{40}$ \\
				\hline
			\end{tabular}
			
	\end{minipage}}
	\begin{minipage}[c]{.8\linewidth}
		\centering
		\subfigure[Workflow net generated by rule-based matching approaches]{
			\label{fig:subfig:1b} 
			\includegraphics[width=0.7\textwidth]{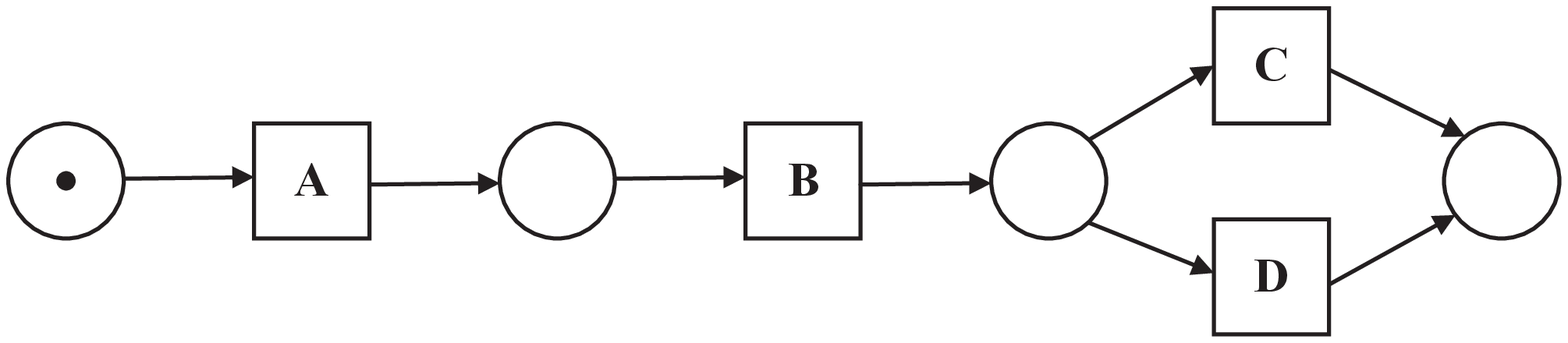}}

		\subfigure[Workflow net generated by SPM-based approaches]{
			\label{fig:subfig:1c} 
			\includegraphics[width=0.7\textwidth]{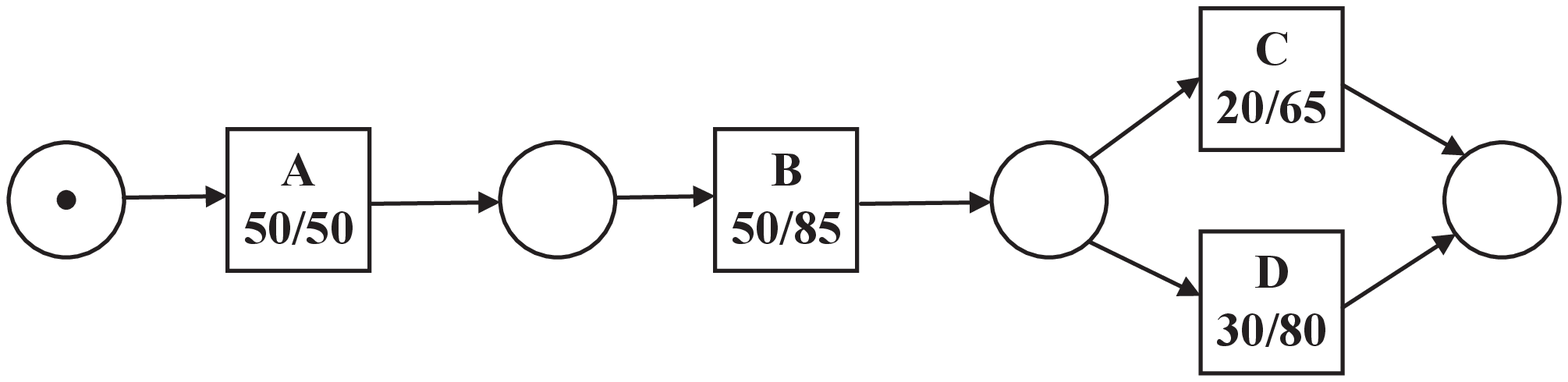}}
	\end{minipage}
	
	\caption{Two main approaches of mining workflow nets from process data}
	\label{1} 
\end{figure}

\subsection{Frequent Subgraph Mining}

Frequent Subgraph Mining (FSM) has been proposed to extract all the frequent subgraphs whose occurrence counts are no less than a specified threshold in a given dataset \cite{cook_substructure_1994}. FSM has practical importance in several applications, ranging from bioinformatics to social network analysis. According to the computation schema, two main types of FSM algorithms are relevant to our work: 

(1) Exact techniques. The most typical exact FSM algorithm is gSpan, which has been used by most researchers for a long time. gSpan adopts a rightmost path expansion strategy to generate candidates \cite{xifeng_yan_gspan:_2002}. Since its effective depth-first search strategy and the DFS lexicographic ordering, the subgraph isomorphism problem, which is NP-Hard, can be solved by comparing the corresponding DFS sequences. However, most existing FSM algorithms are designed for specific datasets, such as biological datasets and molecular datasets \cite{nijssen_quickstart_2004,deshpande_frequent_2005,borgelt_mining_2002}. Some additional constraints, such as closeness, maximal, gap satisfaction, particular structure, and community property, have been added to accelerate the computation \cite{uno2016mining,al-naymat_enumeration_2008,huan_spin:_2004,valluri_margin:_2010,yang_computational_2006}. There have been works improving gSpan \cite{lakshmi_efficient_2013}. For example, Li XT et al. \cite{li_efficient_2007} proposed the GraphGen algorithm, which is an improvement of gSpan. GraphGen reduces the mining complexity through the extension of frequent subtree. The complexity of GraphGen is $O(2^n \cdot n^{2.5}/log n)$.

(2) Inexact techniques. These techniques use heuristic rules to prune the search space efficiently. People usually adopt these techniques when mining large graph datasets \cite{kuramochi_grew_2004,venkatasubramanian_reafum:_2015}. In general, heuristic methods can be used to reduce the execution time, while the generated results might be incomplete. 

The key idea of FSM algorithms is to design a more efficient data representation and search schema by adopting some additional constraints according to their dataset, so that FSM algorithms are widely used to analyze various types of graphs. The advantage of FSM algorithms inspired us to analyze Petri nets more efficiently with the idea of pattern growing. The main difference between the conventional FSM algorithms and our PSpan algorithm is that the structure and semantics of a Petri net is very different from a conventional graph as we mentioned in section \ref{sec:intro}. FSM algorithms cannot be applied to mine frequent subnets of Petri nets directly.

\subsection{Applying FSM to Workflow net Mining}
To the best of our knowledge, there hasn’t been work on mining general Petri nets such as C/E nets directly from the process data. With the development of process mining techniques in Petri net, FSM has been adapted successfully to mining frequent subnets from workflow nets  \cite{greco_mining_2005,greco_mining_2006,chapela-campa_towards_2017,belhajjame_keyword-based_2017,cheong_psm-flow_2018,diamantini_discovering_2015,garijo_mining_2015,tax_n_use_2017}. The idea of applying FSM to workflow subnet mining has been extensively studied. The first such algorithm is the w-find algorithm, which has been widely applied in workflow management systems \cite{greco_mining_2006}. However, they view a workflow net as a directed graph with two kinds of nodes (called workflow graph) and calculate the frequent subgraphs according to the execution paths of the workflow nets \cite{diamantini_discovering_2015}. Chapela et al. \cite{chapela-campa_discovering_2017} used the w-find algorithm and proposed an efficient tool called WoMine \footnote{https://tec.citius.usc.es/processmining/womine/} to retrieve both infrequent and frequent patterns from process models. Garijo et al. \cite{garijo_fragflow_2014} proposed the FragFlow algorithm to analyze workflow nets using graph mining techniques.

For efficiency,  some existing workflow net mining works are based on the SUBDUE algorithm, which is a classical inexact FSM algorithm \cite{cook_substructure_1994}. The key idea of the SUBDUE algorithm is the compression technique, which uses the minimum description length (MDL) as a principle to measure the size of graphs, so that the original graph can be compressed efficiently. Unlike SUBDUE, AGM is a popular exact FSM algorithm that follows the level-wise search strategy, but it is inefficient. 
Table ~\ref{table1} shows several algorithms of mining frequent subnets from workflow nets based on FSM elaborately.

\begin{table}[htbp]
	
	\centering\scriptsize
	\caption{The algorithms of mining frequent subnets in workflow nets based on FSM}
	\label{table1}
	\begin{tabular}{|c|c|c|c|}
		
		\hline
		\textbf{Algorithms}&\textbf{Year}&\textbf{Main idea}&\textbf{The adopted FSM algorithms }\\
 \hline
 w-find \cite{greco_mining_2005,greco_mining_2006}&2005&\tabincell{l}{Transform a large workflow net into \\ workflow graph}&AGM \\\hline
  FragFlow  \cite{garijo_fragflow_2014}&2014&\tabincell{l}{Transform workflow nets into labeled \\ directed acyclic graphs}&\tabincell{c}{SUBDUE, FSG, gSpan} \\\hline

      \tabincell{c}{Keyword-based \\ Search \cite{belhajjame_keyword-based_2017}}&2017&\tabincell{l}{Transform workflow nets into labeled \\ compact graphs}&SUBDUE\\\hline
      WoMine-i \cite{chapela-campa_towards_2017}&2017&\tabincell{l}{Retrieve infrequent patterns in a \\ workflow net} & AGM\\\hline
	\end{tabular}

\end{table}
A conclusion of this section: by reviewing the available literature, we got the impression that, to our best knowledge, we haven’t yet seen works in mining frequent subnets from general Petri nets other than workflow nets, which is just the topic we will develop in this paper. In addition, the subnets found in their works are not necessary workflow nets again. Different from them, our principle followed in this paper is: frequent subnets of X-type Petri nets should be also X-type Petri nets.

\section{Net graph and PSpan algorithm}\label{sec:preliminaries}

\subsection{Net Graph}

To perform the process of mining frequent subnets from pure C/E nets efficiently, we transform the pure C/E nets with two kinds of nodes into another representation form, which is more similar to a planar graph. For that purpose, we propose a novel pseudo-graph representation called net graph (N.G. for short), which has only one kind of node. Each event node $e$ of a pure C/E net is transformed into a node $v$ of N.G., while the information of pure C/E nets’ condition nodes is absorbed in the form of tagging into the nodes and edges of N.G.s. For details, see definition ~\ref{NG}.

\begin{definition}[Net Graph]\label{NG}
	A net graph $NG=(V,D,W)$ is a pseudo-graph transformed from a pure C/E net $N=(C,E;F)$, where\\
	1. $V$ is the set of NG's nodes, $V=E$. $D$ is the set of NG's conditions, $D=C$. In representation, each node $v\in V$ has a tagging containing a sequence of signed conditions, which
	correspond to all condition nodes connected to $v$'s counterpart $e \in E$ of $N$. For any condition $c$ in this sequence, the sign is '-'('+') if $F$ contains an arc from $c(e)$ to $e(c)$. Note that in this sequence, the signed conditions are alphabetically ordered where symbol ‘-’ is always before symbol ‘+’;   \\
	2. $W$ is the set of  NG's edges, where the tagging of each edge $w$ is an ordered sequence of triples $\{({h_u}_i,c_i,{h_v}_i)\}$, where $i \in \{1,2,...,I\}$, where $c_i$ is a condition connecting the two nodes $u_i$ and $v_i$. ${h_u}_i='-'('+')$ means $F$ contains an arc from $c_i(u_i)$ to $u_i(c_i)$. The same for ${h_v}_i$.  $I$  is the number of triples in this sequence, i.e., the number of different conditions connecting $u_i$ and $v_i$. The triples appear in the lexicographical order. Symbol ‘-’ is always before symbol ‘+’.

\end{definition}	

$\hfill \square$

  More exactly, the above definition can be written in algorithm form:

\begin{algorithm}[H]
	\normalem
	\caption{PSpan-Net-to-Net-Graph ($N$)}
	\label{a1}

	\LinesNumbered

	Construct a new net graph $NG$ with its node set $V$ be the event nodes $E$ of the pure C/E net $N$;\\
	\ForEach{node $v\in V$}{Set up its tagging as described in definition ~\ref{NG};}
	 
	\If{event nodes $e_1$ and $e_2$ of $N$ are connected by some condition nodes}{Construct an edge of $NG$ connecting $v_1$ and $v_2$ which correspond to $e_1$ and $e_2$ respectively;}
	\ForEach{edge $e$ of $NG$}{Set up its tagging as described in definition ~\ref{NG};}

\end{algorithm}

\begin{lemma}\label{rule}
	A net graph $NG$ transformed from a net $N$ satisfies the following properties:\\
	1. Each one-sided condition $c$ appears in the tagging of one and only one node of the $NG$, where ‘one-sided’ means it corresponds to a condition node $c$ connected to only one event node $e$ in $N$;\\
	2. A multi-sided condition node may appear in more than one edge’s tagging, but at most once in the tagging of the same edge;\\
	3.	The tagging of a node may be empty. But the tagging of an edge must not be empty;\\
4.	If the same condition $c$ appears in the tagging of more than one edge connecting the same node $v$, then all occurrences of $c$ on these edges should have the same sign in $v$’s direction.

\end{lemma}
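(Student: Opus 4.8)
The plan is direct: the statement is a list of structural invariants of the net-graph encoding, and each of the four items follows by unwinding the tagging rules of Definition~\ref{NG} together with the construction in Algorithm~\ref{a1}. No induction or heavy case analysis is involved; the only content is bookkeeping of which conditions of $N$ get recorded in which taggings of $NG$. Throughout I use that $N$ is pure, which is built into the hypothesis of Definition~\ref{NG}.

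For item~1, let $c$ be a one-sided condition, so that $c$ is incident in $N$ to exactly one event $e$, and let $v$ be the node of $NG$ corresponding to $e$. By the node-tagging rule the signed occurrence of $c$ is placed in the tagging of $v$, so $c$ occurs in at least one node's tagging. Conversely, by Definition~\ref{NG} a condition can occur in the tagging of a node $v'$ only if it is incident to the event of $v'$, and in the tagging of an edge only if it joins the two distinct events at that edge's endpoints; since $c$ is incident to no event besides $e$, it occurs in no other node's tagging and in no edge's tagging. Hence $c$ occurs in one and only one node's tagging.

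For item~2, a multi-sided condition $c$ is incident to several events of $N$, and for each pair of distinct events both incident to $c$, Algorithm~\ref{a1} creates an edge joining the corresponding nodes and records $c$ in that edge's tagging, so $c$ may appear in several edge taggings. Within a single edge $w=(u,v)$, however, the tagging is by Definition~\ref{NG}(2) a sequence of triples indexed by the \emph{distinct} conditions joining $u$ and $v$, each contributing exactly one triple, so $c$ appears at most once in $w$. Item~3 is immediate from the definitions: a node's tagging is a sequence that is permitted to be empty (an event may have no one-sided condition attached), whereas an edge $(u,v)$ is created only when $e_u$ and $e_v$ share at least one condition and all such conditions are then listed, so an edge tagging has at least one triple and can never be empty.

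For item~4, suppose $c$ appears in the taggings of two edges $w_1=(v,v_1)$ and $w_2=(v,v_2)$ sharing the node $v$, and let $e$ be the event of $v$. In $w_1$ the $h$-component attached to $c$ on the $v$-side records whether $F$ contains $c\to e$ or $e\to c$, and likewise in $w_2$; this datum depends only on $c$ and $e$ and not on the other endpoint, and since $N$ is pure exactly one of $c\to e$, $e\to c$ lies in $F$, so $c$ carries one and the same well-defined sign in $v$'s direction on both edges. I do not anticipate any real obstacle here; the one point worth stating explicitly is that purity of $N$ is exactly what makes every signed occurrence --- in node taggings, in edge triples, and in the lexicographic/sign orderings of Definition~\ref{NG} --- unambiguous, so that $NG$ is well-defined in the first place, after which all four properties are simply read off as above.
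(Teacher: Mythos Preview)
Your argument is correct and, since the paper simply writes ``Omitted'' for this lemma, it is in fact considerably more detailed than what the paper supplies; there is no alternative approach in the paper to compare against.

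One small inaccuracy worth fixing: in item~3 you justify the possible emptiness of a node tagging by saying ``an event may have no one-sided condition attached.'' But by Definition~\ref{NG}(1) the tagging of $v$ lists \emph{all} conditions incident to the corresponding event $e$, not only the one-sided ones (the examples in the paper confirm this: $v_1$'s tagging contains the multi-sided conditions $c_6,c_9$). Hence an event with no one-sided conditions but some shared conditions still has a nonempty node tagging. The correct reason a node tagging may be empty is that Definition~\ref{net} does not forbid an event $e$ with $\bullet e = e\bullet = \emptyset$. This does not affect the validity of your proof of the nontrivial half of item~3 (that edge taggings are never empty), which is argued correctly.
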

\begin{proof}
	Omitted.
\end{proof}
$\hfill \square$

\begin{example}
	Figure ~\ref{fig:subfig:3a} is a pure C/E net, Figure ~\ref{fig:subfig:3b} is the corresponding net graph transformed from the C/E net, where the tagging of the net graph edges shows that $c_9$ is an input condition of both $v_1$ and $v_2$, but an output condition of $e_3$. 

\begin{figure}[H]
	
		\begin{minipage}[c]{.5\linewidth}
			\centering
			\subfigure[A pure C/E net]{
			\label{fig:subfig:3a}
			\includegraphics[width=\textwidth]{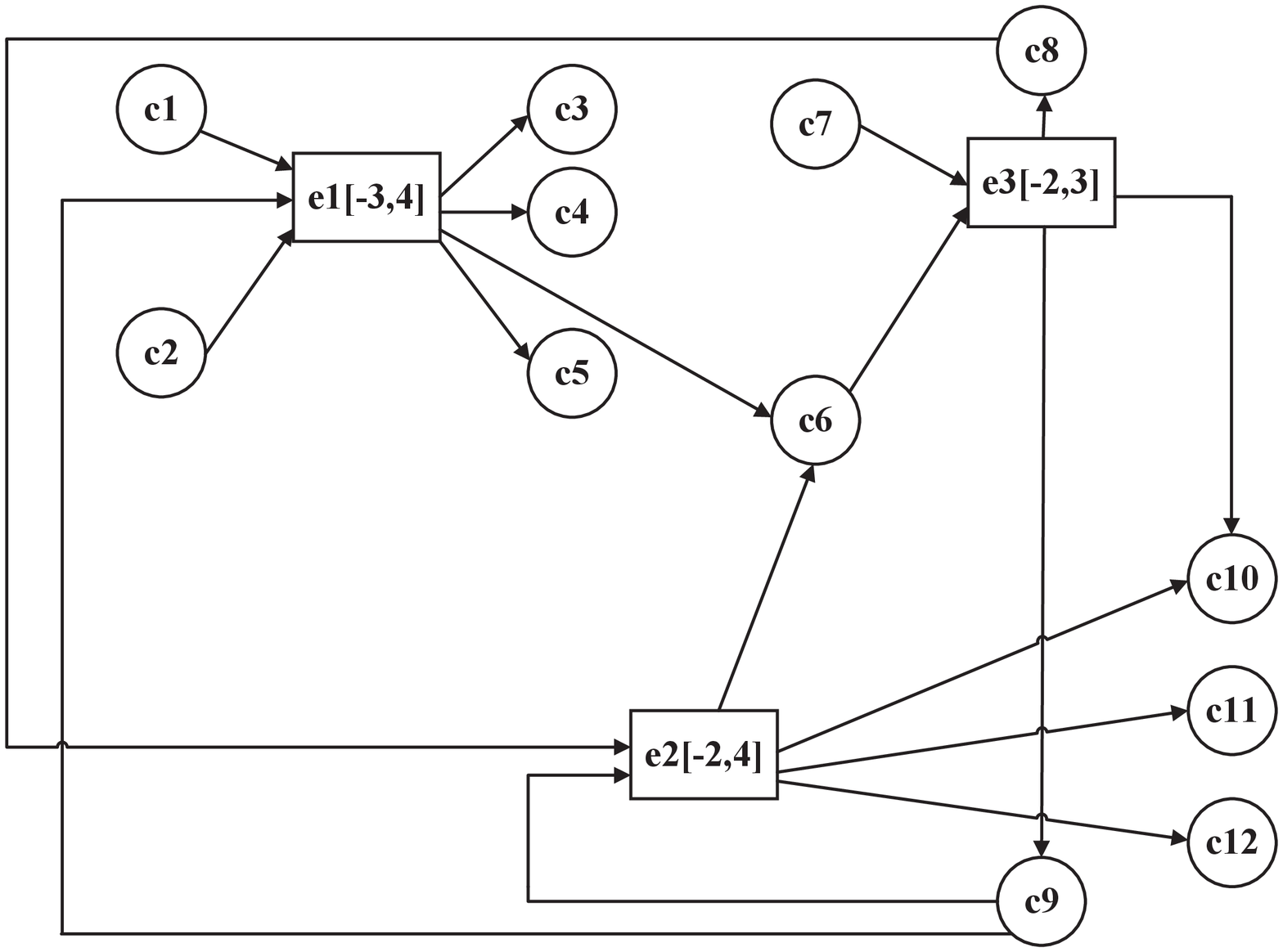}}
	\end{minipage}
	\begin{minipage}[c]{.5\linewidth}
		\centering
		\subfigure[The correponding net graph]{
			\label{fig:subfig:3b} 
			\includegraphics[width=\textwidth]{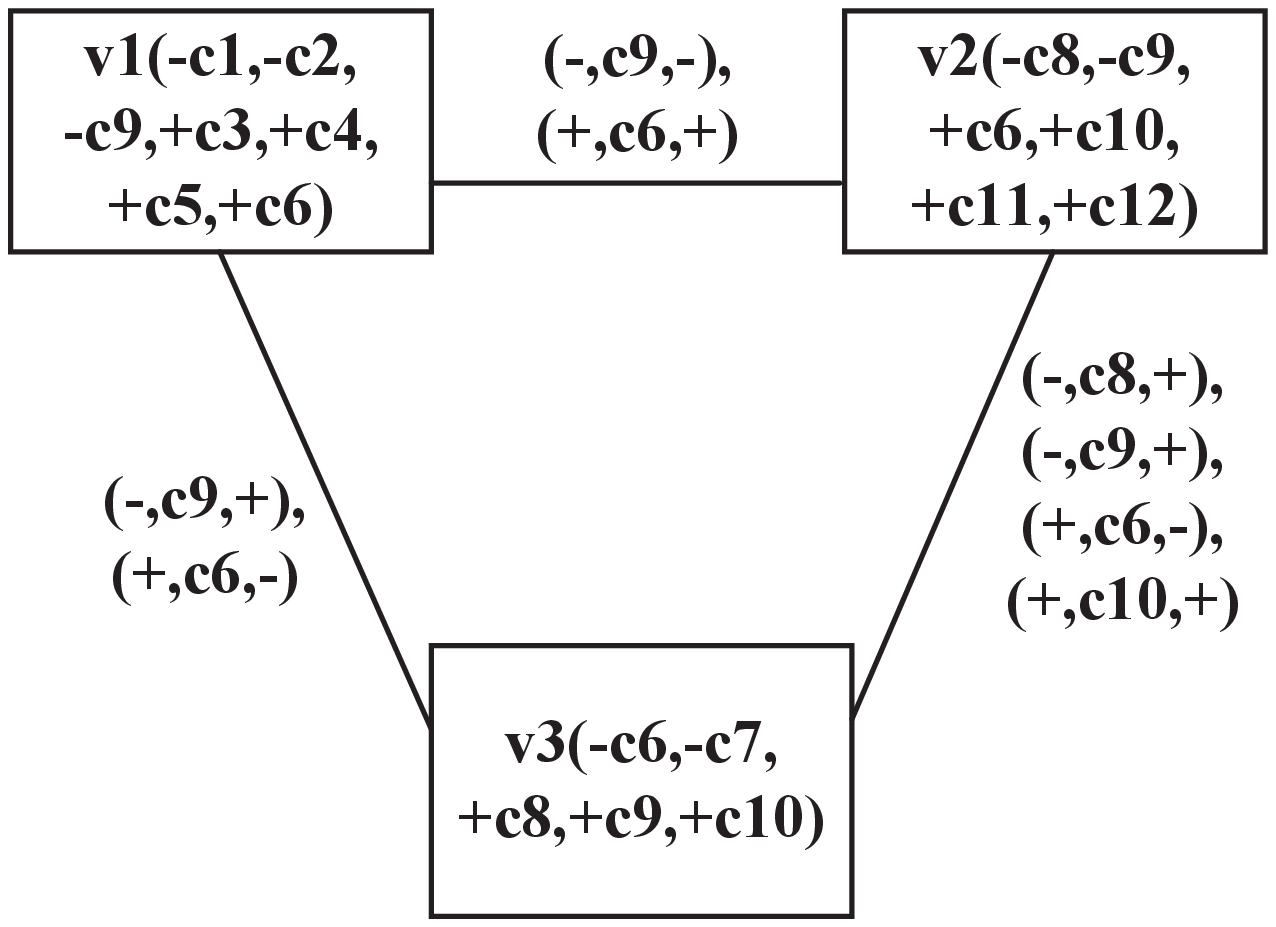}}

	\end{minipage}

\caption{A pure C/E net and its corresponding net graph. }
\end{figure}
\end{example}

Since a single condition node $c$ of a pure C/E net may produce multiple occurrences of condition $c$ in net graph’s tagging, and since different condition nodes in a pure C/E net may have the same name, we have to establish a rule to decide which ones of condition c occurrences in a net graph are transformed from the same condition node $c$ in the original pure C/E net. 

\begin{definition}
An edge of a net graph is a $c$-edge if it’s tagging contains the condition $c$. A $c$-complex is a connected set of $c$-edges. 
\end{definition}
$\hfill \square$

\begin{lemma}
1. All occurrences of a condition $c$ in a $c$-complex correspond to the same condition node in the pure C/E net $N$;\\
2. Any two occurrences of a condition $c$ in two different $c$-complexes correspond to different condition nodes in the pure C/E net $N$.
\end{lemma}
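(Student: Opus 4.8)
The plan is to make precise the intuition behind Algorithm~\ref{a1}: a condition node $\hat c$ of $N$ that is incident to $k\ge 2$ events $e_1,\dots,e_k$ becomes a \emph{clique} of $c$-edges on the corresponding nodes $v_1,\dots,v_k$ of $NG$ (one edge per pair, each carrying a triple whose middle component is $c$), whereas a one-sided condition contributes only a node tag. Thus every condition node gives rise to one connected block of $c$-edges, and the lemma says that these blocks are exactly the $c$-complexes. Concretely I would introduce the map $\kappa$ sending a $c$-edge $w$, with endpoints the nodes of $NG$ corresponding to events $e$ and $e'$ of $N$, to the condition node of $N$, named $c$, that Algorithm~\ref{a1} recorded on $w$. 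The first point to settle is that $\kappa$ is well defined: existence of such a node is immediate from how edge tags are built, and uniqueness rests on Lemma~\ref{rule}.2 (a condition node occurs at most once in a given edge's tag) together with the well-formedness of $N$ --- no event is incident to two distinct conditions both bearing the name $c$, a property implicit in the alphabetical/lexicographical ordering of tags demanded by Definition~\ref{NG}, which I would state as a standing assumption on inputs. By construction $\kappa(w)$ is incident in $N$ to both $e$ and $e'$.

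For part~1 I would fix a $c$-complex $\mathcal C$ and prove $\kappa$ is constant on it. Since the edges of $\mathcal C$ span a connected subgraph of $NG$, it is enough to treat two $c$-edges $w_1,w_2\in\mathcal C$ that share an endpoint $v$ and then propagate along a path of $c$-edges inside $\mathcal C$. Both $\kappa(w_1)$ and $\kappa(w_2)$ are conditions named $c$ incident to the event corresponding to $v$, hence equal by the well-formedness above; Lemma~\ref{rule}.4 is consistent with this, since it already forces the two records of $c$ at $v$ to carry the same sign. Hence every occurrence of $c$ inside $\mathcal C$ traces back to one condition node $\hat c$ of $N$.

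For part~2 I would argue by contradiction. Suppose $c$-edges $w\in\mathcal C_1$ and $w'\in\mathcal C_2$ in distinct $c$-complexes satisfied $\kappa(w)=\kappa(w')=\hat c$. The four endpoints of $w$ and $w'$ are distinct nodes of $NG$ --- two $c$-edges sharing an endpoint would lie in a common $c$-complex, against $\mathcal C_1\neq\mathcal C_2$ --- and the events corresponding to all four are incident to $\hat c$ in $N$. But then Algorithm~\ref{a1} creates a $c$-edge for every pair among those four nodes; in particular some $c$-edge joins an endpoint of $w$ to an endpoint of $w'$, so $w$, that edge, and $w'$ form a connected chain of $c$-edges, placing $w$ and $w'$ in the same $c$-complex --- a contradiction. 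Therefore distinct $c$-complexes correspond to distinct condition nodes.

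I expect the only real obstacle to be the well-definedness used in part~1, i.e.\ excluding an event that is incident to two distinct conditions both called $c$; I would handle it by reading it off the representation conventions of Definition~\ref{NG} (supported by Lemma~\ref{rule}) rather than proving it from scratch. Everything else is a routine unwinding of Algorithm~\ref{a1} together with the elementary facts that a clique is connected and that constancy of $\kappa$ propagates along a path of edges.
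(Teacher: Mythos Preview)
Your proposal is correct and matches the paper's approach: part~1 is argued locally on two adjacent $c$-edges sharing a node and then propagated by connectivity, and part~2 unwinds the clique of $c$-edges that Algorithm~\ref{a1} produces from a single condition node, which is precisely what the paper's one-line appeal to ``algorithm~\ref{a1} and definition~\ref{NG}'' is gesturing at. The only cosmetic difference is that the paper grounds the uniqueness-at-$v$ step in the pure-net (no self-loop) hypothesis rather than in the naming convention you isolate; your formulation is the more explicit of the two.
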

\begin{proof}
The first assertion is true because any two neighbor $c$-edges in the $c$-complex share a common node $v$, who’s counterpart $e$ in $N$ cannot connect the same condition node twice. This means all $c$ occurrences in this $c$-complex correspond to the same condition node $c$ in $N$. Otherwise it would contradict the precondition that $N$ is a pure C/E net and therefore shouldn’t have self-loops. The second assertion is also true since according to algorithm \ref{a1} and definition \ref{NG}, these two occurrences of condition $c$ correspond to different condition nodes in $N$.
\end{proof}
$\hfill \square$

The following algorithm transforms a net graph into a pure C/E net.

\begin{algorithm}[H]
		\normalem
	\caption{PSpan-Net-Graph-to-Net ($NG$)}
		\label{a2}

	\LinesNumbered

Construct a new net $N$ with its event node set $E$ be the nodes $V$ of the net graph $NG$;\\
\If{$c$ is a condition appearing in node $v$'s tagging, but not in the tagging of any edge connecting $v$}{ Attach a condition node $c$ to $v$'s counterpart;\\
\ForEach{event node $e$}{\If{ $c$'s sign in $NG$ is '+'}{Adding an arc in $N$ from $e$ to $c$;}\ElseIf{$c$'s sign in $NG$ is '-'}{Adding an arc in $N$ from $c$ to $e$;}  }

}
 \ForEach{$c$-complex in $NG$}{Set up a condition node $c$ in $N$;\\
 Connect $c$ with each event node $e$ who’s counterpart $v$ is a node of this complex by constructing an arc between them;\\
  The rule for deciding arc’s direction is the same as above;
 	}

\end{algorithm}
\begin{theorem}
	For any pure C/E net $N$,\\
	(1) There exists a transformation $\varphi$ such that for each pure C/E net, $\varphi(N)$ is a net graph.\\
	(2) There exists an inverse transformation ${\varphi}^-1$, such that ${\varphi}^{-1}(\varphi(N))=N$.\\	
\end{theorem}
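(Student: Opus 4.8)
The plan is to exhibit $\varphi$ as the map computed by Algorithm~\ref{a1} (PSpan-Net-to-Net-Graph) and $\varphi^{-1}$ as the map computed by Algorithm~\ref{a2} (PSpan-Net-Graph-to-Net), and then establish the two assertions in turn. For part (1) it suffices to check that the output $\varphi(N)$ really is a net graph in the sense of Definition~\ref{NG}; for part (2) it suffices to check that running Algorithm~\ref{a2} on $NG=\varphi(N)$ reproduces $N$.

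For assertion (1), I would verify that $\varphi(N)$ meets Definition~\ref{NG} item by item: the node set is $V=E$ and the condition set is $D=C$ by construction; each node tagging is built exactly as the definition prescribes (the alphabetically ordered sequence of signed conditions incident to the corresponding event), and an edge between $v_1,v_2$ is created precisely when their counterparts $e_1,e_2$ share a condition node, with tagging the lexicographically ordered list of triples $(h_{u},c,h_{v})$ over those shared conditions. Because $N$ is a pure C/E net there is no pair $a,b$ with $\{(a,b),(b,a)\}\subseteq F$, so no condition is at once an input and an output of the same event; hence the signs $h_u,h_v$ are well defined and every edge tagging is non-empty. The four structural properties of Lemma~\ref{rule} then follow immediately from purity and the construction, so $\varphi(N)$ is a legitimate net graph.

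For assertion (2), the heart of the argument is to show that Algorithm~\ref{a2} applied to $\varphi(N)$ restores $N$ exactly (once the reconstructed nodes are identified with their names). The event nodes of $\varphi^{-1}(\varphi(N))$ are the nodes of $\varphi(N)$, which are the events of $N$, so these sets coincide. For the conditions I would argue by cases using the $c$-complex lemma proved earlier: a one-sided condition node $c$ of $N$ contributes $c$ to exactly one node tagging and to no edge tagging, so the first branch of Algorithm~\ref{a2} reattaches it to the correct event with the arc direction dictated by its sign; a multi-sided condition node $c$ joined to $e_1,\dots,e_k$ produces $c$-edges on all pairs among the corresponding nodes, which form a single connected $c$-complex (a clique of $c$-edges), and by the lemma this complex is collapsed back to one condition node $c$ joined to exactly $e_1,\dots,e_k$ with the original directions. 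Conversely, distinct condition nodes of $N$ either stay in distinct node taggings or sign positions, or give rise to distinct $c$-complexes, so none are merged. Hence $F$ is reproduced arc for arc and $\varphi^{-1}(\varphi(N))=N$ (equivalently, they are isomorphic in the sense of Definition~\ref{iso}).

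The main obstacle, and the step deserving the most care, is the condition-recovery analysis in assertion (2): one must be certain that the $c$-edges coming from a single multi-sided condition node always form \emph{one} connected $c$-complex rather than splitting, and that edges coming from genuinely different condition nodes that happen to share the name $c$ are never joined. The first point is handled by the clique observation above; the second rests on purity of $N$ together with the already-proved lemma that occurrences of $c$ in different $c$-complexes correspond to different condition nodes of $N$. A secondary, more bookkeeping obstacle is verifying that signs — and hence arc directions — are transported consistently in both directions; this is exactly where property~4 of Lemma~\ref{rule} is needed.
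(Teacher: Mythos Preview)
Your proposal is correct and is exactly the argument the paper intends: the paper's own proof is simply ``Omitted,'' but the machinery placed immediately before the theorem (Algorithms~\ref{a1} and~\ref{a2}, Lemma~\ref{rule}, and the $c$-complex lemma) is precisely what you invoke, and in the intended order. Your write-up is therefore the natural filling-in of the omitted details, including the clique observation that a multi-sided condition yields a single connected $c$-complex.
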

\begin{proof}
Omitted.
\end{proof}

$\hfill \square$
\subsection{Net Graph Traversal and DFS code} \label{traverse}
In the following, we give a complete list of PSpan sub-algorithm abstracts at first. Motivated by the pattern-growth mining approach, PSpan consists of five steps. (1) Transform pure C/E nets into net graphs (algorithm \ref{a1}); (2) Traverse the net graphs in the minimal DFS manner (algorithm \ref{traversal},\ref{min}); (3) Filter the minimal DFS codes to get frequent edges in net graphs (algorithm \ref{a5}); (4) Mine frequent sub-net graphs (algorithm \ref{a6},\ref{a7}); (5) Transform the results back into frequent complete subnets of pure C/E nets (algorithm ~\ref{a2}). The framework of the PSpan algorithm can be seen in Figure ~\ref{framework}.

\begin{figure}
	\centering
	\includegraphics[width=0.7\textwidth]{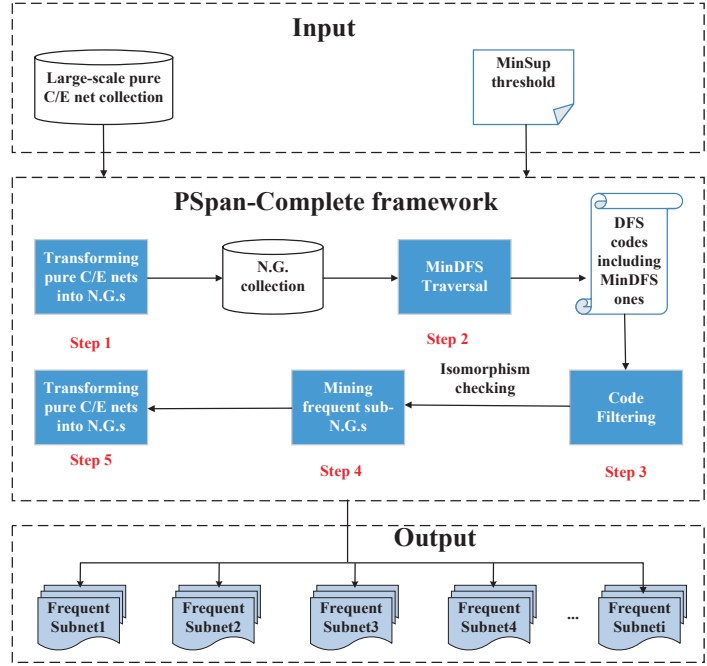}
	
	\caption{The framework of PSpan algorithm. }
	\label{framework}
\end{figure}

In the following algorithm, we use a new strategy of graph traversing. Rather than traverse a net graph in a just depth-first but otherwise arbitrary way, we traverse a net graph in a minimal depth-first way. That means we always select an alphabetically minimal edge for the next depth-first traversal step. This strategy has the advantage that we do not need to sort the DFS codes afterward to get the minimal DFS codes. Let’s take an example. Consider figure \ref{fig:subfig:4a} and calculate the number of possible traversals in the traditional depth first way. For the start node, we may select $v_1, v_2$ or $v_3$. For each of them, there are two possible edges to be selected from. Then the remaining part of traversal is uniquely determined. In total, we have 6 possible traversal paths. But according to our minimal depth first idea, the starting node must be $v_1$, the first edge must be that connecting $v_1$ and $v_2$, because the triple sequence $(-,c_9,-)$, $(+,c_6,+)$ in the edge tagging is alphabetically before the triple sequence $(-, c_9,+), (+,c_6,-)$, since '-' is alphabetically before '+'. As a result, the whole traversal path is uniquely determined.

In the following, we introduce the details of net graph DFS codes. While a DFS code records a traversal of the whole net graph, a DFS code unit records the traversal of an edge of the net graph. As shown by the grammar, each DFS code unit is divided in 6 segments, where the 3-5 segments, i.e., the $<$edge identification$>$ part, are used for frequency determination.

\begin{definition}[DFS code] \label{code}
	The DFS code of a net graph is a depth first travel sequence of code units, where each code unit corresponds to the traversal state of an edge of the net graph. Its syntax formula is as follows. \\
	$<$DFS code$>$::=$<$DFS code unit$>$$|$$<$DFS code unit$>$,$<$DFS code$>$\\
	$<$DFS code unit$>$::=($<$first segment$>$,$<$second segment$>$,$<$edge identification$>$,$<$sixth segment$>$)\\
	$<$first segment$>$::=$<$front node’s traverse order$>$\\
	$<$second segment$>$::=$<$rear node’s traverse order$>$\\
	$<$edge identification$>$::=$<$third segment$>$,$<$fourth segment$>$,$<$fifth segment$>$\\
	$<$third segment$>$::=$<$front node$>$\\
	$<$fourth segment$>$::=($<$edge tagging$>$)\\
		$<$fifth segment$>$::=$<$rear node$>$\\
		$<$sixth segment$>$::=$<$net graph id$>$\\			
	$<$front node$>$::=$<$node name$>$($<$node tagging$>$)\\ 
	$<$rear node$>$::=$<$node name$>$($<$node tagging$>$) \\
	$<$node tagging$>$::= $<$signed condition$>$$|$$<$node tagging$>$, $<$signed condition$>$\\ 
	$<$signed condition$>$::= $<$sign$>$$<$condition$>$\\
	$<$edge tagging$>$::= $<$triple$>$$|$$<$triple$>$, $<$edge tagging$>$\\
	$<$triple$>$ ::= ($<$sign$>$,$<$condition$>$, $<$sign$>$)\\
	$<$sign$>$::= '-' $|$'+'
	
\end{definition}
$\hfill \square$

\begin{example}
The DFS code (consisting of three code units) of figure \ref{NG} produced by minimal depth first search is: \\

(1, 2, \underline{$v_1$ ($-c_1, -c_2, -c_9, +c_3, +c_4, +c_5, +c_6), ((-, c_9,-), (+, c_6, +)),$ $v_2$ ($-c_8, -c_9, +c_6, +c_{10}, +c_{11}, $} \\\underline{$+c_{12}$)},NGid), \\
(2, 3, \underline{$v_2$ ($-c_8, -c_9, +c_6, +c_{10}, +c_{11}, +c_{12}), ((-,c_8,+), (-,c_9, +), (+,c_6, -), (+,c_{10}, +))$, $v_3$ ($-c_6, -c_7, $}\\ \underline{$+c_8, +c_9, +c_{10}$)}, NGid), \\
(3, 1, \underline{$v_3 (-c6, -c7, +c8, +c9, +c10), ((-, c6,+), (+, c9, -))$, $v_1 (-c_1, -c_2, -c_9, +c_3, +c_4, +c_5, +c_6$)}, NGid),\\
where the underlined part is the <edge identification> of each code unit. Note that the edge tagging representation of the third code unit above is different from that of the same code unit in figure \ref{NG}. This is because this edge is traversed from $v_3$ to $v_1$, not from $v_1$ to $v_3$.

\end{example}

\begin{definition}(DFS Lexicographical Order) \label{order}
The sorting rule of DFS codes (only consider the first 5 segments for each code unit) satisfies the lexicographic order defined in the following. This means, for comparing two DFS codes: \\
	1.  Their corresponding code units are compared separately;  \\
2.  The corresponding segments of each DFS code unit are compared separately; \\
3.  For each term of a segment, the corresponding sub-terms are compared separately. \\

\end{definition}
$\hfill \square$

In the following algorithm, we use a queue $SNNG$ to save all visited nodes together with their tagging in traversal order. While traversing the net graph, we call an edge leading from the current node to an unvisited node a forward edge, otherwise a backward edge if it is leading to an already visited node. During traversal, we always choose the minimal edge (determined by the alphabetical order of the code unit representation disregarding its last segment, the net graph id.) starting from the current node. In case there is more than one minimal edge to be chosen from, we chose that edge who’s another end node is minimal. If in the extreme case that the nodes at other ends of the current edges are still the same, then we continue this principle until either a single minimal node or edge is found, or all nodes and edges of the net graph are exhausted (In this case an arbitrary choice is fine).  

\par For frequent subgraph mining, it is enough to mine all frequent edges as the gSpan algorithm does, since frequent nodes alone of a graph are not taken in consideration. However, while mining frequent sub-net graphs, the frequent nodes also matter since each node corresponds to a complete pure C/E subnet. It is possible that a node $u$ is frequent, but the edges connected with $u$ are not frequent. Therefore algorithm \ref{traversal} picks up all frequent nodes of a net graph already during traversing, the frequent edges will be mined only later in algorithm \ref{a5} for code units filtering. 

\begin{algorithm}[H]
	\normalem
	\caption{PSpan-Minimal-DFS-Traversal ($NG$)}
	\label{traversal}
	\KwIn{A net graph $NG$}
	\KwOut{A minimal DFS code of NG in form of a queue of code units}
	\LinesNumbered
	Choose the minimal node $u$ (among NG’s all nodes) as the starting of traversal;\\
 Mark $u$ as current node and already visited, put $u$ and its tagging in $SNNG$;\\
Until all edges traversed do\\
\Begin{
\While{there are untraversed backward edges from current node $u$}{
Call Find-minimal-edge ($u$,0);\\
}
\If{there are untraversed forward edges from current node $u$}{Call Find-minimal-edge ($u$,1);}
}
\end{algorithm}
\begin{algorithm}[H]
		\normalem
	\caption{Find-Minimal-edge ($x,y$)}
	\label{min}
	\If{$y=0(1)$}{\If{there is only one backward (forward) minimal edge $e$ starting from $x$ and leading to some node $v$ or if all next backward (forward) edges starting from $x$ have the same name $e$ and tagging but there’s one who’s another end node $v$ is minimal}{Mark $e$ as traversed and $v$ as visited and is the current node $u$;}\Else{Consider all next backward (forward) edges as only one edge e leading to only one node $v$ and mark $e$ as traversed and $v$ as visited and is the current node $u$;}Put the code unit $e$ in $FDFS$;\\Put the current node $u$ together with its tagging in $SNNG$;}

\end{algorithm}

\begin{lemma}

Given an arbitrary net graph $G=(V,D,W)$, Algorithm \ref{traversal} can generate the minimum DFS code in the minimal DSF traversal way.  

\end{lemma}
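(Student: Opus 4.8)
The plan is to show that the code $\beta$ returned by Algorithm~\ref{traversal} coincides, as a string of code units, with the minimum DFS code $\gamma$ of $G$ under the lexicographic order of Definition~\ref{order}. First I would dispatch the bookkeeping: Algorithm~\ref{traversal} terminates because every call to Find-Minimal-edge (Algorithm~\ref{min}) marks a previously untraversed edge as traversed and $G$ has finitely many edges; and, $G$ being connected, every edge is eventually reached, so $\beta$ is a syntactically well-formed DFS code in the sense of Definition~\ref{code} that records a complete traversal of $G$. (If one does not assume $G$ connected, the same argument applies component by component, with components concatenated in the induced order on their minimal nodes.)

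The core is an induction on the code-unit position $k$, with hypothesis ``the first $k$ code units of $\beta$ equal the first $k$ code units of $\gamma$''. The engine of the induction is an auxiliary observation: any legal prefix $u_1\cdots u_k$ of a DFS code of $G$ determines the set of already-traversed edges, the set of visited nodes together with their traverse orders, and the \emph{current position} from which the next code unit must be grown --- namely the node on the backtrack/rightmost path that Find-Minimal-edge would act on, together with the forward/backward status of its remaining incident edges. This is forced by the grammar of Definition~\ref{code} and by the convention, built into both the algorithm and the order of Definition~\ref{order}, that untraversed backward edges of the current node are emitted before its forward edges. Granting this, at step $k+1$ the minimal code $\gamma$ must pick its $(k+1)$-st unit from exactly the same menu of candidate edges (and, in ties, candidate end nodes) available to Algorithm~\ref{traversal}; Find-Minimal-edge by construction selects the lexicographically least such unit; hence $\gamma$'s $(k+1)$-st unit is $\ge$ $\beta$'s, and minimality of $\gamma$ forces equality. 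The base case $k=0$ is the statement that the minimum DFS code starts at the alphabetically least node with the least node tagging and then takes the least incident edge --- exactly the ``choose the minimal node $u$'' step followed by the first Find-Minimal-edge call; the joint minimization over (front node, edge tagging, rear node) is covered by the tie-breaking clause ``we continue this principle until either a single minimal node or edge is found''.

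Two points need care, and I expect them to be the main obstacle. First, one must justify that greedy prefix-minimization is \emph{sound}: picking the locally least next unit cannot trap the traversal in a configuration that a locally worse choice would have avoided. This holds because Definition~\ref{order} compares codes unit-by-unit from the left, so a strictly smaller earlier unit dominates the whole tail, and because from any partial DFS of a connected $G$ one can always complete a legal traversal --- hence the greedy choice is always extendable and never strictly beaten later. Second, the net-graph-specific tie cases --- several incident edges with identical name and tagging leading to end nodes that are themselves indistinguishable by name and tagging, and recursively so --- must be shown harmless: there the relevant local substructures are related by an automorphism, so the emitted code units are literally the same string whichever representative is chosen, which is precisely what makes the ``arbitrary choice is fine'' clause legitimate. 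Everything beyond these two points is the standard exchange/greedy argument familiar from the correctness of minimal DFS codes in gSpan.
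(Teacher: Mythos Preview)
Your proposal is correct and follows the same approach as the paper: induction on the position of the code unit. The paper's own proof consists of the single phrase ``By induction'' with no further detail, so your argument is a legitimate and considerably more explicit realization of exactly the route the authors gesture at; the two care points you flag (soundness of greedy prefix-minimization under the left-to-right order of Definition~\ref{order}, and harmlessness of the automorphism ties) are precisely the content the paper suppresses.
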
 

\begin{proof}
	By induction.
\end{proof}
$\hfill \square$
\begin{example}
	Figure ~\ref{fig:subfig:4a} illustrates the net graph $K$. Using the DFS strategy to traverse $K$, different DFS codes can be generated. The bold line denotes the DFS tree. Figure ~\ref{fig:subfig:4b}\ref{fig:subfig:4c}\ref{fig:subfig:4d} are three examples. Forward (backward) edges are represented with bold (dotted) line. Figure ~\ref{fig:subfig:4c} follows the minDFS approach ( Algorithm \ref{traversal}).
\end{example}
\begin{figure}[H]
	
	\begin{minipage}[c]{.4\linewidth}
		\centering
		\subfigure[N.G. $K$]{
			\label{fig:subfig:4a}
			\includegraphics[width=1.2\textwidth]{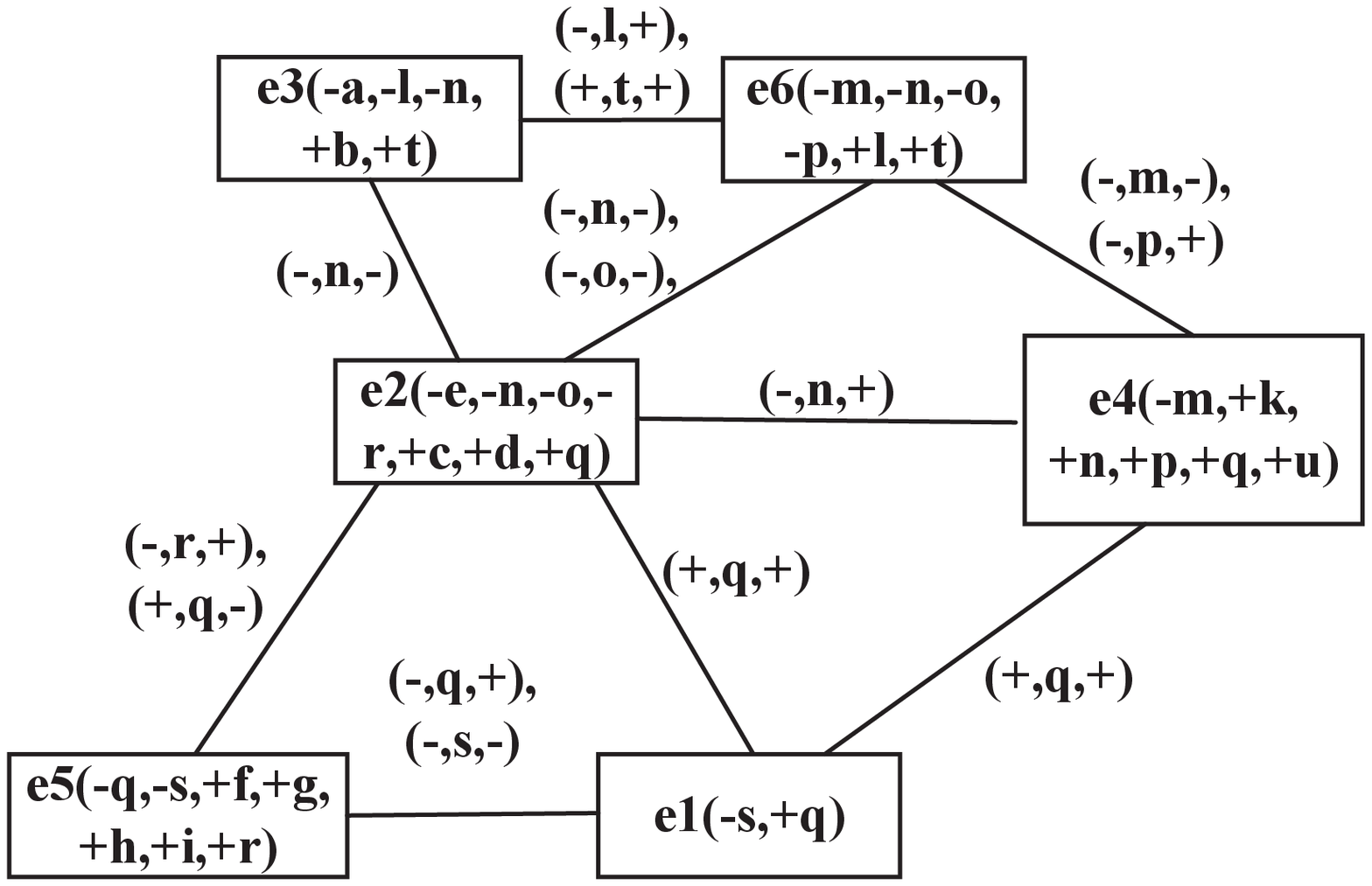}}
	\end{minipage}
	\begin{minipage}[c]{.6\linewidth}
		\centering
		\subfigure[Traversal 1]{
			\label{fig:subfig:4b} 
			\includegraphics[width=0.85\textwidth]{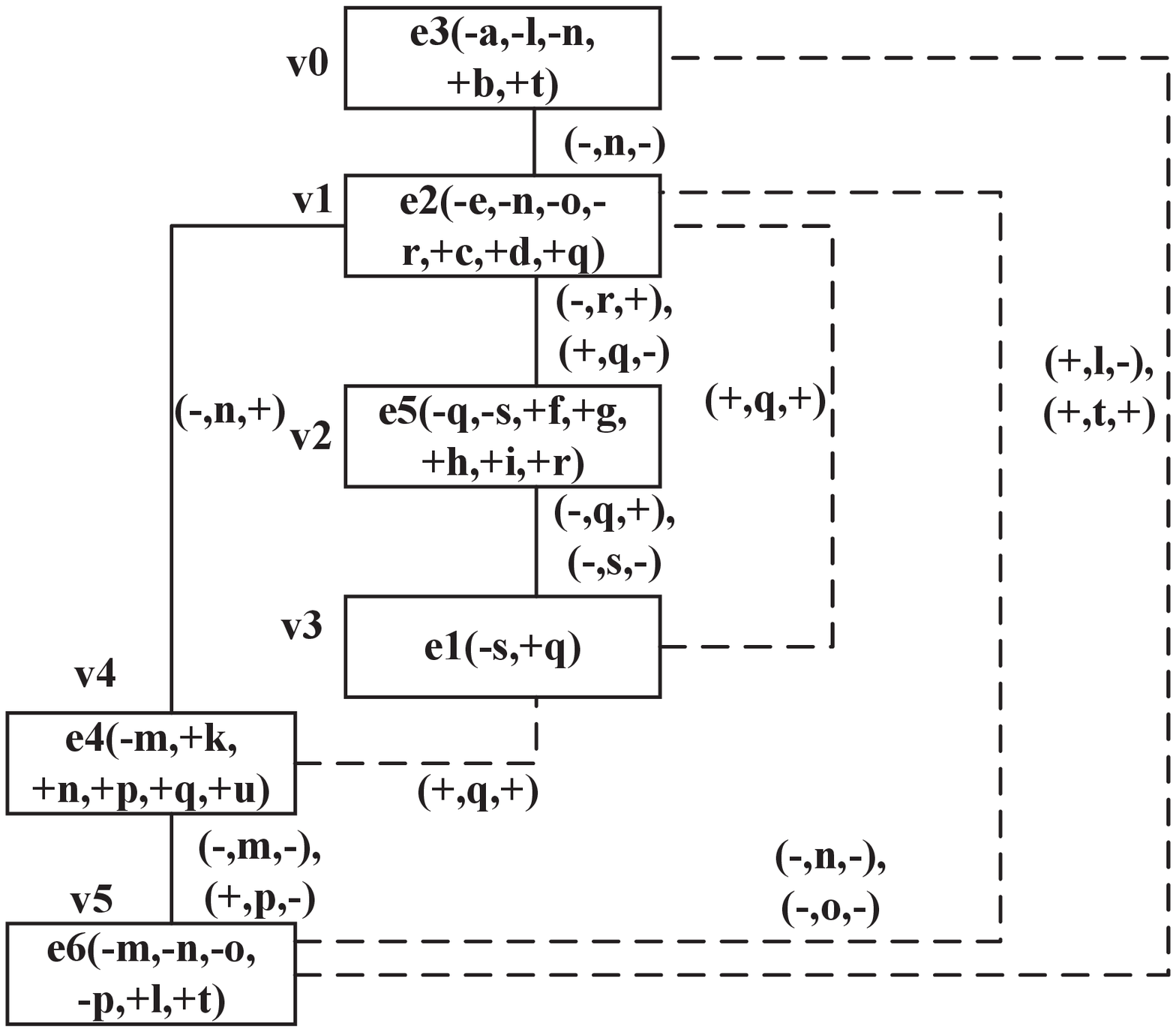}}

	\end{minipage}
		\begin{minipage}[c]{.44\linewidth}
		\centering
		\subfigure[Traversal 2 (minimal)]{
			\label{fig:subfig:4c} 
			\includegraphics[width=\textwidth]{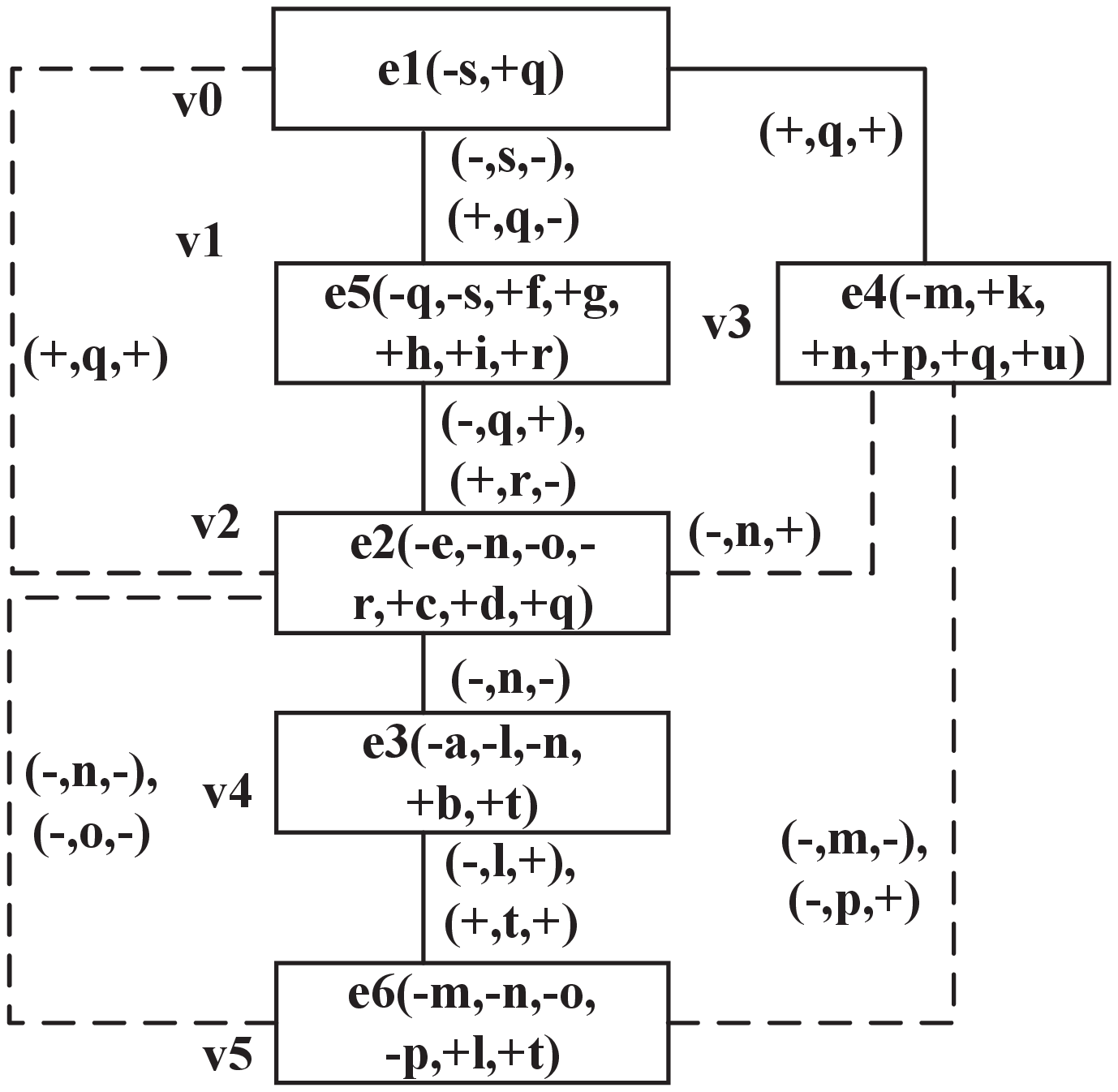}}

	\end{minipage}
	\begin{minipage}[c]{.44\linewidth}
	\centering
	\subfigure[Traversal 3 ]{
		\label{fig:subfig:4d} 
		\includegraphics[width=\textwidth]{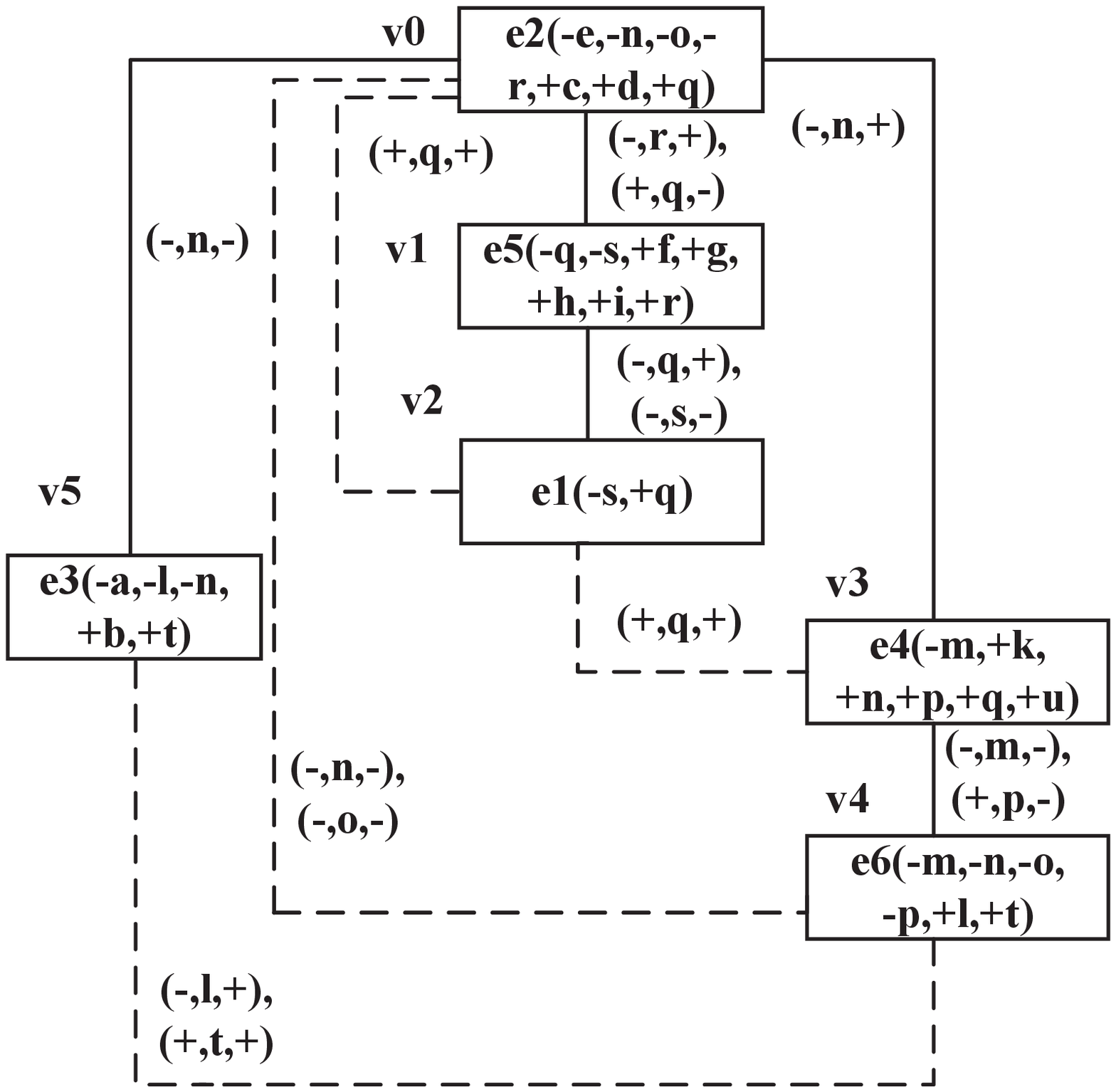}}

\end{minipage}
	\caption{An example of traversing the net graph}
	
\end{figure}

\begin{example}
	Table ~\ref{table2} illustrates the DFS codes of Figure ~\ref{fig:subfig:4b}\ref{fig:subfig:4c}\ref{fig:subfig:4d}, suppose that the id of net graph $K$ is 1. 
\end{example}
\begin{table}[H]
	\centering
		\caption{The DFS codes of Figure ~\ref{fig:subfig:4b}\ref{fig:subfig:4c}\ref{fig:subfig:4d}}
	\label{table2}
	\scalebox{0.8}{\begin{tabular}{|c|c|c|c|}
			\hline
			edge&(b) $\alpha$&(c) $\beta$&(d) $\gamma$\\
			\hline
			0&\tabincell{c}{$(0,1,e_3(-a,-l,-n,+b,+t)$,\\$((-,n,-))
$,\\$e_2(-e,-n,-o,-r,+c,+d,+q),1))$}&\tabincell{c}{$(0,1,e_1(-s,+q),((-,s,-),(+,q,-))$,\\$e_5(-q,-s,+f,+g,+h,+i,+r),1)$}&\tabincell{c}{$(0,1,e_2(-e,-n,-o,-r,+c,+d,+q),$\\$((-,r,+)$,$(+,q,-))$,\\$e_5(-q,-s,+f,+g,+h,+i,+r),1)$}\\
			\hline
		   1&\tabincell{c}{$(1,2,e_2(-e,-n,-o,-r,+c,+d,+q)$,\\$((-,r,+)
				$,(+,q,-)),\\$e_5(-q,-s,+f,+g,+h,+i,+r),1)$}&\tabincell{c}{$(1,2,e_5(-q,-s,+f,+g,+h,+i,+r)$,\\$((-,q,+)$,(+,r,-)),\\$e_2(-e,-n,-o,-r,+c,+d,+q),1)$}&\tabincell{c}{$(1,2,e_5(-q,-s,+f,+g,+h,+i,+r)$,\\$((-,q,+),(-,s,-)),e_1(-s,+q),1)$}\\
			\hline
			2&\tabincell{c}{$2,3,e_5(-q,-s,+f,+g,+h,+i,+r)
	$,\\$((-,q,+),(-,s,-)),e_1(-s,+q),1)$}&\tabincell{c}{$(2,0,e_2(-e,-n,-o,-r,+c,+d,+q)$,\\$((+,q,+)),e_1(-s,+q),1)$}&\tabincell{c}{$(2,0,e_1(-s,+q),((+,q,+))$,\\$e_2(-e,-n,-o,-r,+c,+d,+q),1)$}\\
\hline
	3 &\tabincell{c}{$3,1,e_1(-s,+q),((+,q,+))
	$,\\$e_2(-e,-n,-o,-r,+c,+d,+q),1)$}&\tabincell{c}{$(0,3,e_1(-s,+q),((+,q,+))$,\\$e_4(-m,+k,+n,+p,+q,+u),1)$}&\tabincell{c}{$(0,3,e_2(-e,-n,-o,-r,+c,+d,+q)$,\\((-,n,+)),\\$e_4(-m,+k,+n,+p,+q,+u),1)$}\\
\hline
	4 &\tabincell{c}{$(1,4,e_2(-e,-n,-o,-r,+c,+d,+q)
	$,\\$((-,n,+))$,\\$e_4(-m,+k,+n,+p,+q,+u),1)$}&\tabincell{c}{$(3,2,e_4(-m,+k,+n,+p,+q,+u)$,\\$((-,n,+))$,\\$e_2(-e,-n,-o,-r,+c,+d,+q),1)$}&\tabincell{c}{$(3,2,e_4(-m,+k,+n,+p,+q,+u)$,\\$((+,q,+)),e_1(-s,+q),1)$}\\
\hline	
	5 &\tabincell{c}{$(4,3,e_4(-m,+k,+n,+p,+q,+u)
	$,\\$((+,q,+)),e_1(-s,+q),1)$}&\tabincell{c}{$(2,4,e_2(-e,-n,-o,-r,+c,+d,+q)$,\\$((-,n,-))$,\\$e_3(-a,-l,-n,+b,+t),1)$}&\tabincell{c}{$(3,4,e_4(-m,+k,+n,+p,+q,+u)$,\\$((-,m,-)$,$(+,p,-))$,\\$e_6(-m,-n,-o,-p,+l,+t),1)$}\\
\hline	
	6 &\tabincell{c}{$(4,5,e_4(-m,+k,+n,+p,+q,+u)$,\\$((-,m,-)
	$,$(+,p,-))$,\\$e_6(-m,-n,-o,-p,+l,+t),1)$}&\tabincell{c}{$(4,5,e_3(-a,-l,-n,+b,+t)$,\\$((-,l,+)$,$(+,t,+))$,\\$e_6(-m,-n,-o,-p,+l,+t),1)$}&\tabincell{c}{$(4,0,e_6(-m,-n,-o,-p,+l,+t)$,\\$((-,n,-),(-,o,-))$,\\$e_2(-e,-n,-o,-r,+c,+d,+q),1)$}\\
\hline
	7 &\tabincell{c}{$(5,1,e_6(-m,-n,-o,-p,+l,+t)$,\\$((-,n,-),(-,o,-))
	$,\\$e_2(-e,-n,-o,-r,+c,+d,+q),1)$}&\tabincell{c}{$(5,3,e_6(-m,-n,-o,-p,+l,+t)$,\\$((-,m,-),(-,p,+))$,\\$e_4(-m,+k,+n,+p,+q,+u),1)$}&\tabincell{c}{$(0,5,e_2(-e,-n,-o,-r,+c,+d,+q)$,\\$((-,n,-))$,\\$e_3(-a,-l,-n,+b,+t),1)$}\\
\hline	
	8 &\tabincell{c}{$(5,0,e_6(-m,-n,-o,-p,+l,+t)$,\\$((+,l,-),(+,t,+))
	$,\\$e_3(-a,-l,-n,+b,+t),1)$}&\tabincell{c}{$(5,2,e_6(-m,-n,-o,-p,+l,+t)$,\\$((-,n,-),(-,o,-))$,\\$e_2(-e,-n,-o,-r,+c,+d,+q),1)$}&\tabincell{c}{$(5,4,e_3(-a,-l,-n,+b,+t)$,\\$((-,l,+)$,$(+,t,+))$,\\$e_6(-m,-n,-o,-p,+l,+t),1)$}\\
\hline			
	\end{tabular}}

\end{table}

\subsection{Frequent Subnets Mining}

The following algorithm extracts all frequent edges of a set of net graphs.

\begin{algorithm} [H]
	
	\caption{PSpan-Code-Filtering($FDFS,SNNG,MinSup$)}
	\label{a5}
	\LinesNumbered
	\KwIn{ $FDFS$ and $SNNG$ built in algotrithm \ref{traversal}, the frequency threshold $MinSup$}
	\KwOut{The set $FDFS$ of frequent code units, the set $MinFDFS$ of frequent forward edges, the set $SNNG$ of frequent net graph nodes with tagging}
	
	Sort the code units in $FDFS$ according to the alphabetical order of their $<$edge identification$>$ and $<$net graph id$>$ parts;\\ 
	Remove $FDFS$'s code units who's $<$edge identification$>$ parts are infrequent;\\
	Sort the remaining $FDFS$’s code units alphabetically except the $<$edge identification $>$ parts;\\
	Store those remaining $FDFS$’s code units which denote forward edges also in $MinFDFS$;\\
	Sort $SNNG$'s nodes (with tagging) according to their frequency;\\
	Remove infrequent $SNNG$'s nodes (with tagging);

\end{algorithm}

In the following algorithm, we use an ordered queue $MinFDFS$ to store all DFS code units containing minimal edges from $FDFS$. A minimal edge is a forward edge in the minimal traverse strategy as defined in section 3.2. These edges form the DFS tree in $[]$. The algorithm starts from a code unit $s$ in $MinFDFS$ as its initial code unit when constructing a frequent sub-net graph which grows each time when a new frequent edge from $FDFS$ is added to it. Let $FD(s)$ denote the growing frequent sub-net graph starting from $s$.

\begin{algorithm} [H]
		\normalem
	\caption{PSpan-FqNG-Mining($FDFS,MinFDFS,MinSup$)}
	\label{a6}
	\LinesNumbered

	\KwOut{All frequent sub-net graphs}

\While{$MinFDFS \not = \emptyset$}{
	Take the first code unit $s$ from $MinFDFS$;\\
	\If{$s$ appears in more than $MinSup$ net graphs}{put $s$ in $FD[1]$;\\Call PSpan-FqNG-Construction($s,FDFS,MinSup,2$);}

}	
Put the content of $SNNG$ in $FD[0]$ as additional frequent sub-net graphs (corresponding to frequent 1-complete subnets);

\end{algorithm}

Note that algorithm \ref{a6} only mines frequent sub-net graphs containing at least one edge. As we mentioned before, frequent nodes of net graphs also matter since they correspond to frequent 1-complete sub-pure-C/E nets, subnets for short. All such frequent nodes of net graphs are collected in $FD[0]$ which is the set $SNNG$. See algorithm \ref{min} and algorithm \ref{a5} above. 

\begin{algorithm}[H]
		\normalem

\caption{PSpan-FqNG-Construction($s,FDFS,MinSup,i$)}
\label{a7}
\LinesNumbered
Delete $s$ from $MinFDFS$;\\ \tcc{$s$ contains a code unit $s_0$ and $s_0*e$ is the combination of two code units}
\ForEach{code unit $e$ in $FDFS$}{Form a sub-net graph $s*e$, which means that $s$ is extended with $e$ by combining its $s_0$ with $e$;\\
\If{there is no such $e$}{Return;}
\ForEach{such $s*e$}{\If{$s*e$ appears in more than $MinSup$ net graphs }{Save such $s*e$ in $FD[i]$;\\Remove $e$ from $MinFDFS$ if $e$ is in it;}
Call PSpan-FqNG-Construction($s*e,FDFS,Minsup,i+1$);
}
}

\end{algorithm}

Figure ~\ref{construction} shows the procedures of algorithm PSpan-FqNG-Mining and PSpan-FqNG-Construction, where each $FDFS[i]$ means that part of FDFS contained in $i$-th net graph. Each $FD[j]$ contains all frequent netgraphs consisting of $j$ edges.  

\begin{figure}[H]
	\centering
	\includegraphics[scale=0.3]{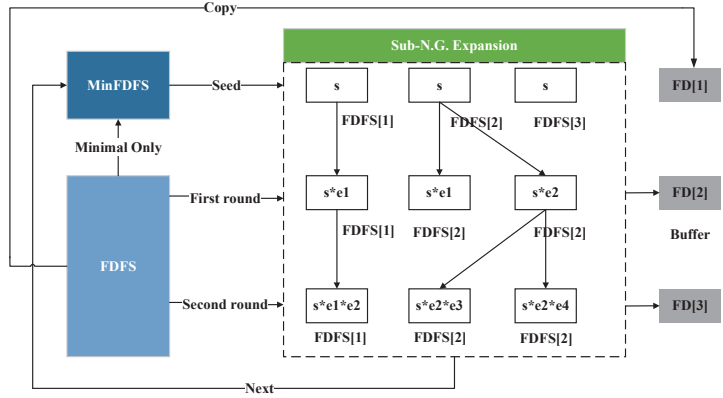}
	
	\caption{The procedure of mining sub-N.G.s in PSpan}
	\label{construction}
\end{figure}

\subsection{Complexity of PSpan}\label{complexity}

Similar as for gSpan, the complexity of PSpan mainly consists of two parts: the complexity of constructing frequent sub-net graphs and that of isomorphism checking. Roughly, the former is $O(2^{n \times m})$ where $n$ is the number of frequent net graph edges, $m$ is the number of conditions. This is because each edge tagging contains a few conditions. This number corresponds to the number of condition nodes connecting a pair of event nodes in a pure C/E net. Usually this number is very small. The complexity of isomorphism checking is also $O(2^{n \times m})$. Therefore the total complexity is $O(2^{n \times m} \cdot 2^{n \times m})$. Since this number is usually small, we can assume it as a constant and keep the complexity  $O(2^{n} \cdot 2^{n})$ as for gSpan. 

What would be the complexity if we were to design an algorithm DSpan for mining frequent pure C/E nets directly on the pure C/E net representation considered as a graph? The complexity would be $O(2^{n} \cdot 2^{n})$  where $n$ is the number of frequent arcs of the pure C/E nets. But here the number $n$ is very large. Assume the pure C/E nets have $J$ frequent event nodes and $K$ frequent condition nodes. The number of possibilities that an event node directs $h$ arcs to $h$ different condition nodes, $1 \le h \le K$ , is $2^K$. On the other hand, the number of possibilities that a condition node directs $m$ arcs to $m$ different event nodes, $1 \le m \le J$, is $2^J$. This means the complexity of such a DSpan would be as high as $O(2^{2^J+2^K} \cdot 2^{2^J+2^K})$. This shows the superiority of our strategy of mining frequent pure C/E subnets on the net graph representation. 

Note that our PSpan algorithm can still be improved by making use of other advanced techniques. For example we can adopt the frequent subtrees technique of GraphGen \cite{li_efficient_2007} to lower down further the complexity of PSpan.

\section{Experimental Evaluation} \label{sec:experiment}
Since large-scale C/E net resources are not available, in this section, we propose a methodology for randomly generating a massive C/E nets reservoir, which contains a series of algorithms. We have implemented the PSpan algorithm and evaluated its performance on the C/E net dataset. To verify the correctness and efficiency of PSpan, we have designed a variant of PSpan, called PSpan2, which serves as a baseline for testing PSpan. The results show that the frequent subnets/subgraphs obtained by the two methods are consistent in the sense of downward inclusion. Our PSpan algorithm outperforms the baseline PSpan2 approach. We implement these two algorithms in C++. All the experiments are conducted on a PC with Intel(R) Core(TM) i7-6700 CPU@3.40GHZ and 32G RAM, and the operating system is Window 10. 
\subsection{A methodology for generating pure C/E net reservoir} \label{methodology}

Algorithms \ref{a9}-\ref{connect} how the procedures of randomly generating a pure C/E net reservoir, where amount is the number of totally generated nets while $U$ is the maximum number of events in a net. $H$ is the maximum number of arcs connecting a generated net and a new subnet added to it. $rU$ and $rH$ are parameters for deciding a random or fixed approach of the net generation. 

\begin{algorithm}[H]
		\normalem
	\caption{ generate-CE-reservoir ($amount,U,H,rU,rH$)}
	\label{a9}
	\LinesNumbered
	\KwIn{Integer $amount>0,U>0,H>0,rU \in \{0,1\},rH \in \{0,1\}$ }

	\For{$i=1$ to $amount$}{Call generate-CE($U,H,rU,rH$) and output the result;\\\tcc{Algorithm \ref{a10},generate a C/E net}}
	
\end{algorithm}

\begin{algorithm}[H]\label{generate}
		\normalem
	\caption{ generate-CE($U,H,rU,rH$)}
	\label{a10}
	\LinesNumbered
	\If{$rH=0$}{$num=U$}\Else{Randomly generate an integer $0 < num \le U$;}

	Initialize $i=1$;\\
	\While{$i \le num$}{
		Call $NX=$1-complete($H,rH$);\\ \tcc{Algorithm \ref{one}, generate a 1-complete C/E net $NX$}
		\If{$i=1$}{$N=NX$}
		\Else{Call connect($N,NX$);}
	
		\tcc{Algorithm ~\ref{connect}, connect the new generated 1-complete with existing nets}
		$i=i+1$;\\
		Return $N$;
	}
	
\end{algorithm}
\begin{algorithm}
		\normalem
	\caption{1-complete($H,rH$)}
	\label{one}
	\LinesNumbered
	Generate an event $X$ randomly \protect \footnotemark[4] \\
	\If{$rH=0$}{$H(X)=H$}\Else{Randomly generate $0 < H(X) \le H$;}
	Generate the number of input and output conditions $X_1 \ge 0,X_2 \ge 0$ randomly, such that $1 \le X_1+X_2 \le H$;\\
	 Generate the set of input (output) conditions $C(C')$ of $X$ such that $|C|(|C'|)=X_1(X_2), C \cap C'=\emptyset$;\\
	Return the generated 1-complete net;
\end{algorithm}
\footnotetext[4] {All the generating algorithms use the Mersenne Twister to generate random numbers. It has been the most classical pseudo-random number generation algorithm proposed by Masamoto and Takusi in 1997. The algorithm has been encapsulated in the standard libraries of languages such as C++ and Python.}
\begin{algorithm}[H]
		\normalem
	\caption{connect($N_1,N_2$)}
	\label{connect}
	\LinesNumbered
	Generate a positive integer $NXC$ randomly that $0<NXC \le min(H(N_1),H(N_2))$;\\\tcc{$H(N_1),H(N_2)$ denote the number of conditions in $N_1$ and $N_2$ respectively}
	$j=1$;\\
	\While{$j \le NXC$}{Choose a condition from each of $N_1$ and $N_2$ randomly, integrate them into one and keep the name of $N_1$’s condition for the combined one;\\
		$j=j+1$;}
	
\end{algorithm}
\begin{lemma}
	All the C/E nets generated by algorithm \ref{a9}-\ref{connect} are pure nets.\\

\end{lemma}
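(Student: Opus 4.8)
The plan is to prove the statement by induction on the construction, following exactly the recursive structure of Algorithms~\ref{a9}--\ref{connect}. Since a pure net is by definition one without self-loops, it suffices to show that the generated flow relation $F$ never contains both $(c,e)$ and $(e,c)$ for a condition $c$ and an event $e$; these are the only node pairs that could form a self-loop in a C/E net, because arcs only join conditions with events.

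First I would treat the \textbf{base case}: every net produced by Algorithm~\ref{one} (\texttt{1-complete}) is pure. Such a net has a single event $X$, a set $C$ of input conditions and a set $C'$ of output conditions, and the algorithm explicitly enforces $C\cap C'=\emptyset$. Hence no condition lies in both ${}^\bullet X$ and $X^\bullet$, so $\{(c,X),(X,c)\}\subseteq F$ is impossible; and since $X$ is the only event, there is no self-loop at all.

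Next I would analyse \texttt{connect} (Algorithm~\ref{connect}), the only operation in Algorithm~\ref{a10} that combines two nets. Assume inductively that the net $N_1$ built so far is pure and that the freshly generated $1$-complete net $N_2=NX$ is pure (base case). Two facts are needed: (i) the event sets of $N_1$ and $N_2$ are disjoint, because each $1$-complete net contributes a freshly generated event; and (ii) \texttt{connect} adds no new arcs --- it only identifies $NXC$ pairs of conditions, one taken from $N_1$ and one from $N_2$, each merged pair retaining $N_1$'s name. Now suppose for contradiction that the resulting net had a self-loop $\{(c,e),(e,c)\}$. By (i), $e$ belongs to exactly one of the two nets, say $e\in E_1$ (the case $e\in E_2$ is symmetric). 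By (ii), every arc incident to $e$ was already an arc of $N_1$ (merging conditions changes which node a condition name denotes, but never creates an arc between $e$ and a condition node that was not already present in $N_1$). Hence $\{(c,e),(e,c)\}$ was already a self-loop of $N_1$, contradicting the inductive hypothesis. Therefore \texttt{connect}$(N_1,N_2)$ is pure, and a straightforward induction on the number of iterations of the \texttt{while} loop in Algorithm~\ref{a10} shows every net returned by \texttt{generate-CE} is pure; consequently so is every net placed in the reservoir by Algorithm~\ref{a9}.

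The only delicate point --- and the part I would write out most carefully --- is claim (ii): one must verify that identifying two condition nodes genuinely produces no incidence $(c,e)$ or $(e,c)$ beyond those inherited from $N_1$ or from $N_2$, and in particular that conditions merged within the same \texttt{connect} call, or across successive calls as $N$ grows, cannot interact to create a new condition--event incidence. Combined with the disjointness of the event sets from (i), this reduces everything else to bookkeeping over the (almost) disjoint union of the two nets.
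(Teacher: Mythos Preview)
Your proposal is correct and follows exactly the approach the paper indicates: the paper's own proof consists of the single word ``Induction,'' and you have supplied precisely the inductive argument that word gestures at (base case via $C\cap C'=\emptyset$ in Algorithm~\ref{one}, inductive step via disjointness of event sets and the fact that \texttt{connect} only identifies conditions). Your explicit flagging of claim~(ii) as the point requiring care is appropriate and already goes well beyond what the paper records.
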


\begin{proof}
	Induction.
\end{proof}
$\hfill \square$
\begin{example}
	Figure ~\ref{random} illustrates a whole procedure of generating a pure C/E net. Suppose given two parameters $U=6,H=8$, which means that each new generated C/E net can contain up to 6 1-complete nets, and each 1-complete net can own up to 8 conditions (including input and output ones). Suppose a C/E net $N_1$ consisting of three 1-complete nets is given in figure ~\ref{random_a}. The random algorithm generates two integers $m=1,n=7$, which means that the new 1-complete net $N_2$ should have 1 event and 7 conditions. The algorithm then randomly divides 7 in 4+3, which means 4 input and 3 output conditions, as figure ~\ref{random_b} shows. $N_1$ and $N_2$ will be then randomly connected by identifying some conditions of $N_1$ with some of $N_2$. All the identified conditions use their names in $N_1$. 
	The connect operation is denoted as $N_1[+]N_2$, and the result after merging is shown in figure ~\ref{random_c}.
\end{example}
\begin{figure}[htbp]
	
	\begin{minipage}[c]{.3\linewidth}
		\centering
		\subfigure[The initial C/E net $N_1$]{
			\label{random_a}
			\includegraphics[width=1.1\textwidth]{pure.eps}}
	\end{minipage}
	\begin{minipage}[c]{.3\linewidth}
		\centering
		\subfigure[The new 1-complete C/E net $N_2$]{ 
			\label{random_b} 
			\includegraphics[scale=0.29]{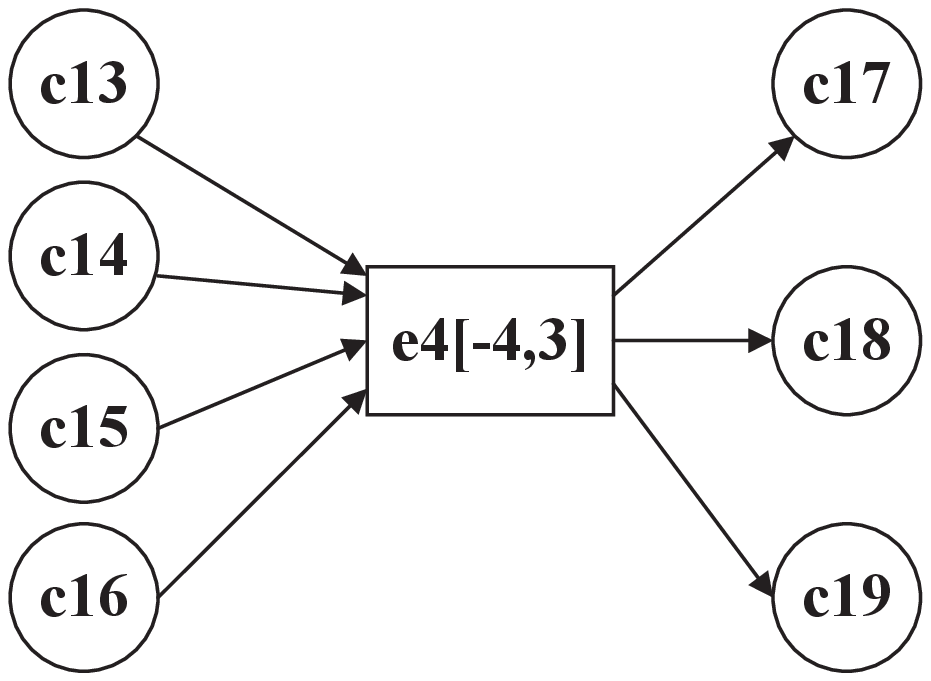}}

	\end{minipage}
	\begin{minipage}[c]{.3\linewidth}
		\centering
		\subfigure[$N_1$ connect with $ N_2$]{
			\label{random_c} 
			\includegraphics[width=1.4\textwidth]{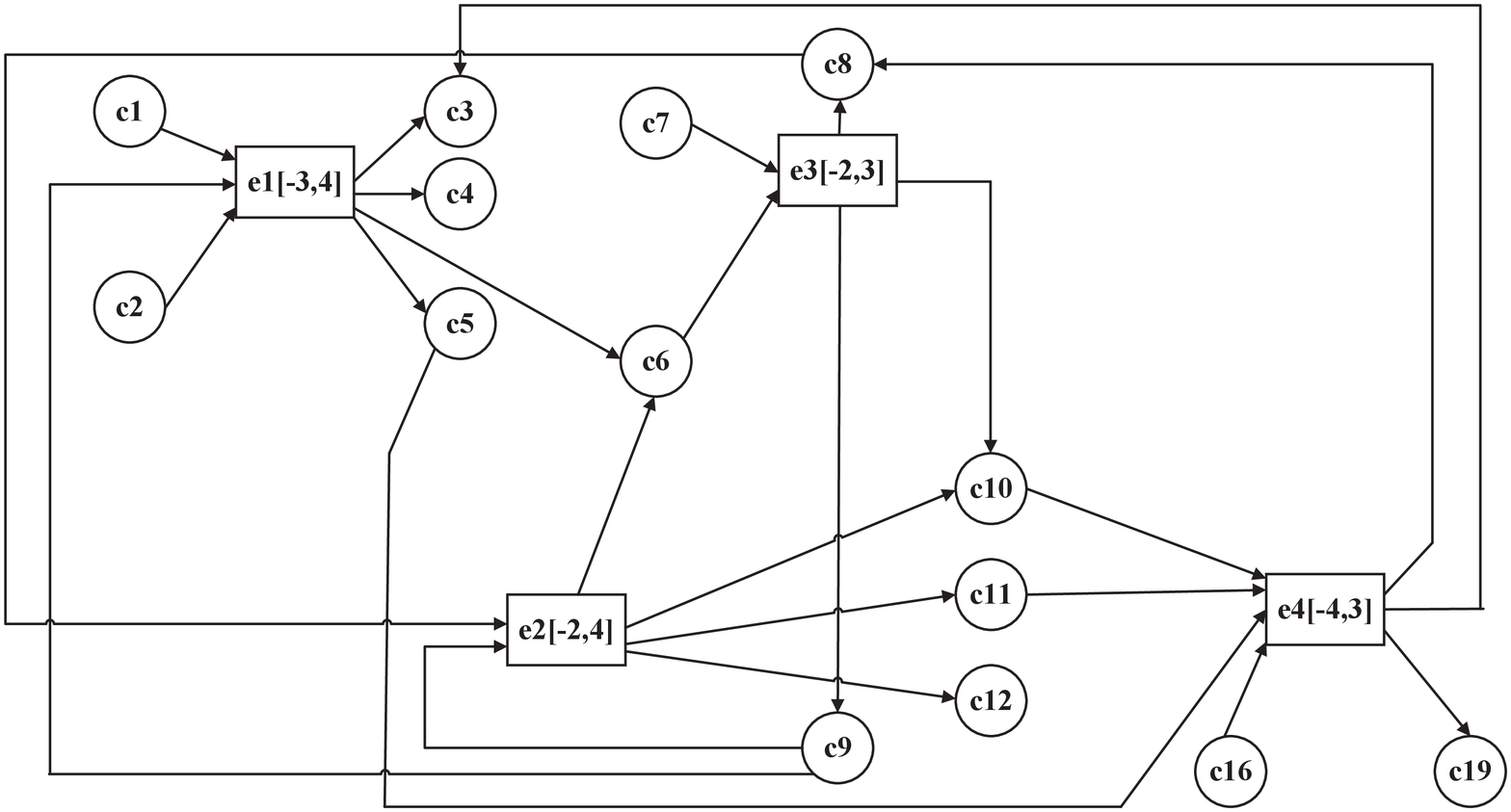}}
		
			\end{minipage}

	\caption{An example of generating a C/E net}
	\label{random}
\end{figure}

\subsection{The Implementation of PSpan algorithm}

We have designed two implementations: (1) PSpan1, which transforms nets to net graphs and then mines frequent subnets on net graphs as introduced in Section \ref{sec:preliminaries}. (2) PSpan2, which transforms nets to net graphs as PSpan1 does, but then transforms these net graphs further into general graphs through symbol transformation, and finally applies the gSpan algorithm to it directly. All the nodes’ labels with their tagging are translated alphabetically into letters with some specific rules for avoiding conflicts. For example, a node with tagging $‘e_2(-e,+c,+d)’$ is represented as $‘e2xuezwczwdy’$, an edge tagging  $‘(-,r,+),(+,q,-)’$ is represented as $‘xuzrzwyzxwzqzuy’$. After the above transformation, the comparison rules of minDFS codes coincide with the DFS lexicographic order in Definition \ref{order}.

In the following, we use the planting method to insert test nets into the nets of the pure C/E net reservoir constructed above. PSpan1 and PSpan2 are performed to mine the frequent subnets, respectively. Our experiments showed that the obtained results of the two approaches are the same.

\subsection{Algorithm testing with planted nets}

In graph mining, the quality of the experimental test data directly affects the performance of algorithms. Recently, the planted motif approach has been widely used in many problems, e.g., sequential pattern mining \cite{lu_exact_2007}, community detection \cite{yang_combining_2009}, and gene co-expression network analyzing \cite{pei_mining_2005}. Many researchers use domain knowledge to construct planted motifs. Inspired by them, we use a random planting method to test our PSpan algorithms. In order to avoid confusion, we use the following terminology: (1) Planting net: net to be planted in other nets. (2) Planted net: net where other nets are already planted in. (3) Test net: net ready for accepting other nets planting in. The details are depicted as follows.

\begin{algorithm}
		\normalem
	\caption{Planting-test-nets ($NS,n,g,MinSup$)}
	 \label{plant}
	\LinesNumbered
\KwIn{A set of test nets $NS$, the number of planting nets $n>0$, two parameters $g,H$, wich denote the maximum of events in each planted net and the maximum of arcs connected with each event, the frequency threshold $Minsup >0$;}

Generate $n$ planting nets randomly;\\\tcc{ Algorithm \ref{a10}}
\ForEach{planting net $x$}{
Generate a random integer $m(x)$ that $MinSup < m(x) \le N$;\\
Choose $m(x)$ test nets randomly, and put them into $S(x)$;\\
\ForEach{selected test net $y \in S(x)$}{Call connect ($y,x$) to produce a planted net $y[+]x$; \\Let $NS(x)=NS\{ where \, all \, test \, nets y \in S(x) replaced \, by \,planted \, nets\, y[+]x\};$}
}

\end{algorithm}

The validation rule is that for each planted net in $NS(x)$ perform PSpan algorithm and check whether the mined results contain the planting net $x$ or not.

\begin{figure}[H]
	\centering
	\includegraphics[scale=0.3]{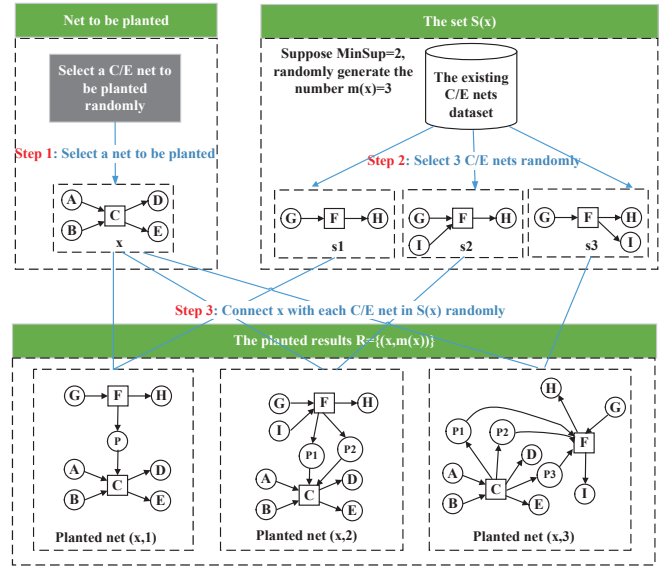}
	
	\caption{The procedure of nets planting}
	\label{valid}
\end{figure}
For example, we generate 10  planting nets randomly in the first step. For each planting net $x$, we choose a subset of test nets from 1000 nets randomly and plant $x$ in all these test nets, where the size of the subset is greater than the MinSup 500. Table ~\ref{results}shows that the number of mined frequent subnets is equal to or slightly bigger than the number of planted nets. This is reasonable since new copies of planting nets may be generated during planting itself.

\begin{table}[htbp]
	\centering
		\caption{The results of planting 10 nets into test nets}
	\label{results}
	\begin{tabular}{|c|c|c|c|c|c|}
		
		\hline
		\tabincell{c}{\textbf{The id of }\\ \textbf{planting net}}&\tabincell{c}{\textbf{\# Nodes} \\\textbf{in the} \\ \textbf{planting}\\ \textbf{net}}&\tabincell{c}{\textbf{\# Arcs}\\\textbf{in the}\\  \textbf{planting}\\\textbf{net}}&\tabincell{c}{\textbf{\#Planted }\\ \textbf{ test nets}}&\tabincell{c}{\textbf{\#Mined}\\ \textbf{freq.}\\\textbf{ subnets by}\\\textbf{ PSpan1/2}}&\tabincell{c}{\textbf{Success}\\\textbf{ratio}}\\\hline
		Planting net 1&11&17&571&571& 1.00\\\hline
		Planting net 2&15&18&506&506&1.0 \\\hline
		
		Planting net 3&15&16&509&509&1.0\\\hline
		Planting net 4&11&19& 595&595&1.0\\\hline
		Planting net 5&15&18& 567&567&1.0\\\hline
		Planting net 6&14&16& 560&560&1.0\\\hline	
		
		Planting net 7&13&15& 576&576&1.0\\\hline	
		Planting net 8&13&17& 519&519&1.0\\\hline
		Planting net 9&9&10& 555&555&1.0\\\hline	
		Planting net 10&12&14& 534&534&1.0\\\hline					
	\end{tabular}

\end{table}

Note that \# means "The number of", similarity hereinafter.\\ Success ratio = \# Mined freq.subnets / \# Planted test nets.

After rigorous verification, we found that PSpan1 and PSpan2 could mine all their planting net copies. Figure \ref{sandian} shows part of the mining results. Take “Planting Net 1” as an example. Due to the space limitation, we only depicted the first 110 of the 571 planted test nets, as shown in figure \ref{sandian}. The horizontal axis represents the planting net 1 occurrences appearing in the planted test nets (1000). For example, $x=20,y=950$ means that the planting net 20 was planted into the test net of id 950. It can be found that the distributions of the results generated by PSpan1 and PSpan2 are the same. 
\begin{figure}[H]
	\centering
	\includegraphics[scale=0.8]{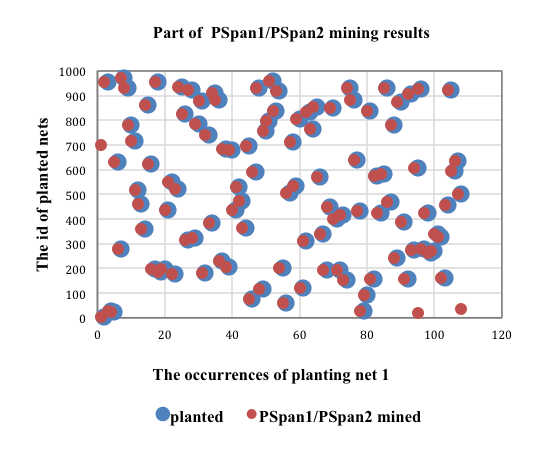}
	
	\caption{Planted test nets and the mined results of PSpan1/PSpan2}
	\label{sandian}
\end{figure}

\subsection{Performance of PSpan}
In Section \ref{complexity},we analyzed the complexity of the PSpan algorithm, and it reveals that the complexities of PSpan1 and PSpan2 are almost the same. Now we make an ideal experiment to show the necessity of introducing net graph representation. We will see what would happen if we considered the pure C/E nets as directed graphs and then apply the gSpan algorithm directly on their net representation (we call this hypothetic approach as DSpan). Our analysis showed that the complexity of DSpan is exponentially higher than PSpan. In this section, we will verify this result with two experiments. (1) The reduction power of net graph structures. When the number of arcs of nets increases, the ratio of net graph edges’ number / net arcs’ number reduces exponentially. (2) The scalability of PSpan1 vs. DSpan. Given the frequency threshold and the number of test nets, the experiments compare the time and space overheads required by PSpan1 and DSpan when the numbers of arcs in the test nets are increasing.

\subsubsection{The reduction power of net graph structures}

We investigate the change of the ratio (net graph edges number / net arcs number) by increasing the number of nodes in test nets (32, 71, 142, 278, 617, including events, conditions) and increasing the number of arcs as shown in Table 4.4.1. The reduction rates can be seen in table ~\ref{tab:c}. It is also depicted in figure ~\ref{figure13}.

Figure ~\ref{figure13} shows that with the rapid increasing of arcs number in test nets, the edges number of net graphs increases only slowly. The compression power of PSpan1 is obvious. 
\begin{table*}[!htb]
	\centering
	\caption{\label{tab:c} The edges number of N.G.s v.s. the arcs number of C/E nets  }
	 \scalebox{0.7}{
		\begin{tabular}{|c|c|c|c|c|c|}
		
		\hline
		
		Avg. arcs number (ARN) in each test net &50&100&200&500&1000\\\hline
		Avg. edges number (AEN) in each N.G.&31&52&70&114&139 \\\hline
		
		\tabincell{c}{AEN/ARN }&62\%&52\%&35\%&23\%&\textbf{14\%}\\\hline
					
	\end{tabular}
		
	}
\end{table*}

\begin{figure}[H]
	\centering
	\includegraphics[scale=1]{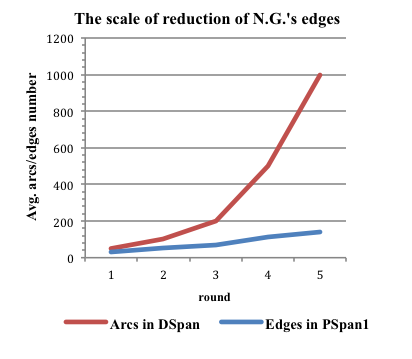}
	
	\caption{The reduction rate of N.G.s’ edge numbers with Petri nets’ arc numbers}
	\label{figure13}
\end{figure}

\subsubsection{The Scalability of PSpan1 vs. DSpan }
The following experiments compare the scalability of PSpan1 vs. DSpan by checking their runtime and space overheads depending on the growing number of C/E net arcs, with a fixed number of test nets (1000) C/E nodes but a varying number of arcs (50,100,150,200,250), and a frequency threshold of 100. At first, we set the frequency threshold as 100, fix the number of test nets as 1000 and the maximum number of events in each planting net as 5. We then compare the running overheads of DSpan and PSpan1 for processing test nets with the number of nodes (35, 63, 92, 122, 147) and the number of arcs (50, 100, 150, 200, 250). The runtime and memory usage can be seen in figure ~\ref{figure15:a} and \ref{figure15:b}. These results coincide with the time complexity analysis in Section \ref{complexity}. They show that the overheads of PSpan1 are significantly reduced compared with those of DSpan. Figure \ref{figure15:a} shows that the growth tendencies of PSpan1 and DSpan have a huge difference. When the number of arcs in a C/E net is growing from 150 to 200 and even larger, PSpan1’s overhead is only several seconds, whereas DSpan has been running for more than 1 day (it cannot be shown in the figure). Figure \ref{figure15:b} shows that memory usage of PSpan1 is much smaller than that of DSpan.  Figure ~\ref{figure16} compares the overheads of PSpan1 and PSpan2.

\begin{figure}[h]
	\centering
	\begin{minipage}{.45\linewidth}
		
		\subfigure[The runtime of PSpan1 and DSpan]{
			\includegraphics[scale=0.9]{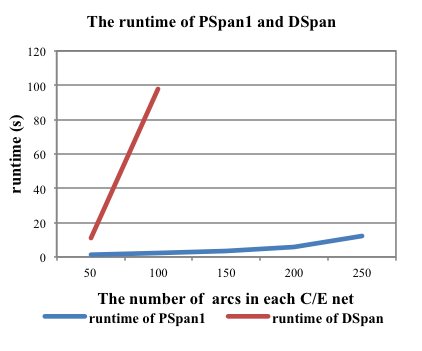}
		\label{figure15:a}}
	\end{minipage}
	\begin{minipage}{.45\linewidth}
		
		\subfigure[The memory used by PSpan1 and DSpan]{
			\includegraphics[scale=0.9]{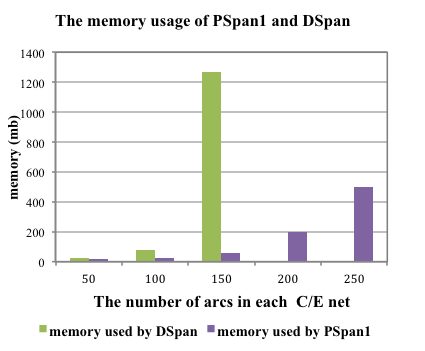}
			\label{figure15:b}}
	\end{minipage}

	\caption{The comparison of PSpan1 and DSpan}
	\label{figure15}
\end{figure}

 \begin{figure}[H]
 	\centering
 	\begin{minipage}{.45\linewidth}
 		
 		\subfigure[The runtime of PSpan1 and PSpan2]{
 			\includegraphics[scale=0.9]{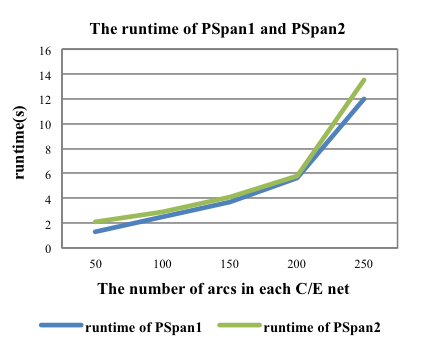}
 			\label{figure16:a}}
 	\end{minipage}
 	\begin{minipage}{.45\linewidth}
 		
 		\subfigure[The memory used by PSpan1 and PSpan2]{
 			\includegraphics[scale=0.9]{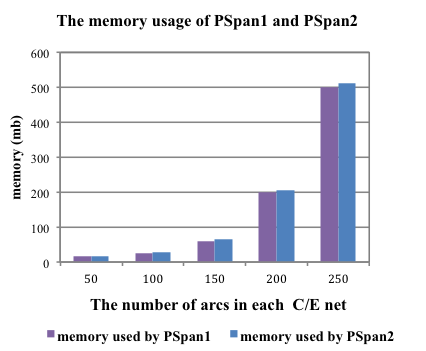}
 			\label{figure16:b}}
 	\end{minipage}

 	\caption{The comparison of PSpan1 and PSpan2}
 	\label{figure16}
 \end{figure}

  Figure \ref{figure16:a} shows that with the number of arcs increasing, both PSpan1 and PSpan2’s runtimes show an exponential growth, but they are in the same order of magnitude. The efficiency of PSpan1 is slightly higher than that of PSpan2. Figure \ref{figure16:b} shows that the memory usage of PSpan1 and PSpan2 are also at the same magnitude. However, the memory used by PSpan1 is slightly less than that of PSpan2.

\section{Extensions to other Subclasses of Petri Nets} \label{sec:extension}

The idea of the PSpan algorithm can be easily extended to other subclasses of Petri nets. Different subclasses of Petri nets have different syntax and/or semantics \cite{peterson_petri_1977}.  Their frequent complete subnets mining algorithms should consider these additional rules. The relevant issues increased the difficulty of designing such algorithms. In the following, we will show how our method can be modified to be applied to eight other subclasses of Petri nets.   
\begin{itemize}
	\item Place/Transition Nets \cite{reisig_lectures_1998}
\end{itemize}

P/T net for short. A P/T net can be represented as a quintuple $N=(S,T;F,K,W)$, where $S,T,F$ denote sets of places, transitions and arcs respectively. $K$ is the set of capacities of $S$ elements, where a capacity denotes the admitted number of tokens in a place. It may be a positive integer or infinity. Note that two separate places with the same name in a P/T net may have different capacities. $W$ denotes the set of arc weights. The weight of an arc shows how many tokens should go through the arc during each firing. A complete subnet of a P/T net should be also a P/T net. Accordingly, the definition of a net graph needs also a modification. We should add weights to the signs in each triple of each edge’s tagging, one after each sign. 

We should also add weights to the signs of one-sided places in each node’s tagging. The P/T net in figure ~\ref{pt} originates from the C/E net in figure ~\ref{fig:subfig:3a} by adding weights to the signs at appropriate positions as described above. Figure \ref{ptng} shows the net graph transformed from figure ~\ref{pt}, where node $t_2$’s tagging in figure ~\ref{pt} is transformed to $t_2(-s_8(K_8,W_{13}),-s_9(K_9,W_{14}),$\\$+s_6(K_6,W_{18}),+s_{10}(K_{11},W_{15}),+s_{11}(K_{11},W_{16}),+s_{12}(K_{12},W_{17}))$, while the edge’s tagging between $t_1$ and $t_2$ becomes $((-W_2,s_9(K_9),-W_{14}),(+W_6,s_6(K_6),+W_{18}))$. Note that the labels of places cannot be decided by their capacity merely because places with the same label maybe have the different capacities in a P/T net.

\begin{figure}[H]
	
	\begin{minipage}[c]{.5\linewidth}
		\centering
		\subfigure[An example of P/T net]{
			\label{pt}
			\includegraphics[width=\textwidth]{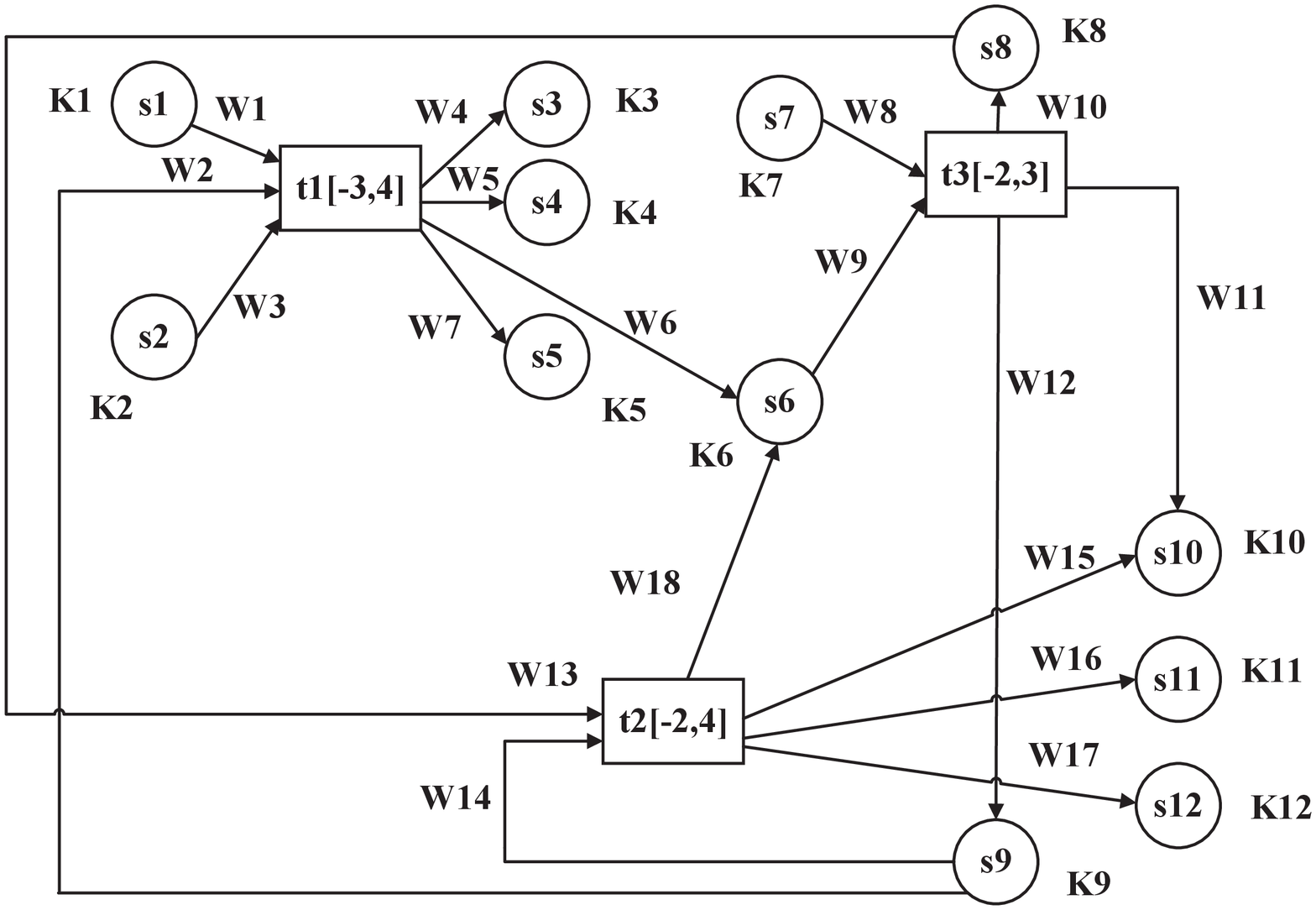}}
	\end{minipage}
	\begin{minipage}[c]{.5\linewidth}
		\centering
		\subfigure[The correponding net graph]{
			\label{ptng} 
			\includegraphics[width=\textwidth]{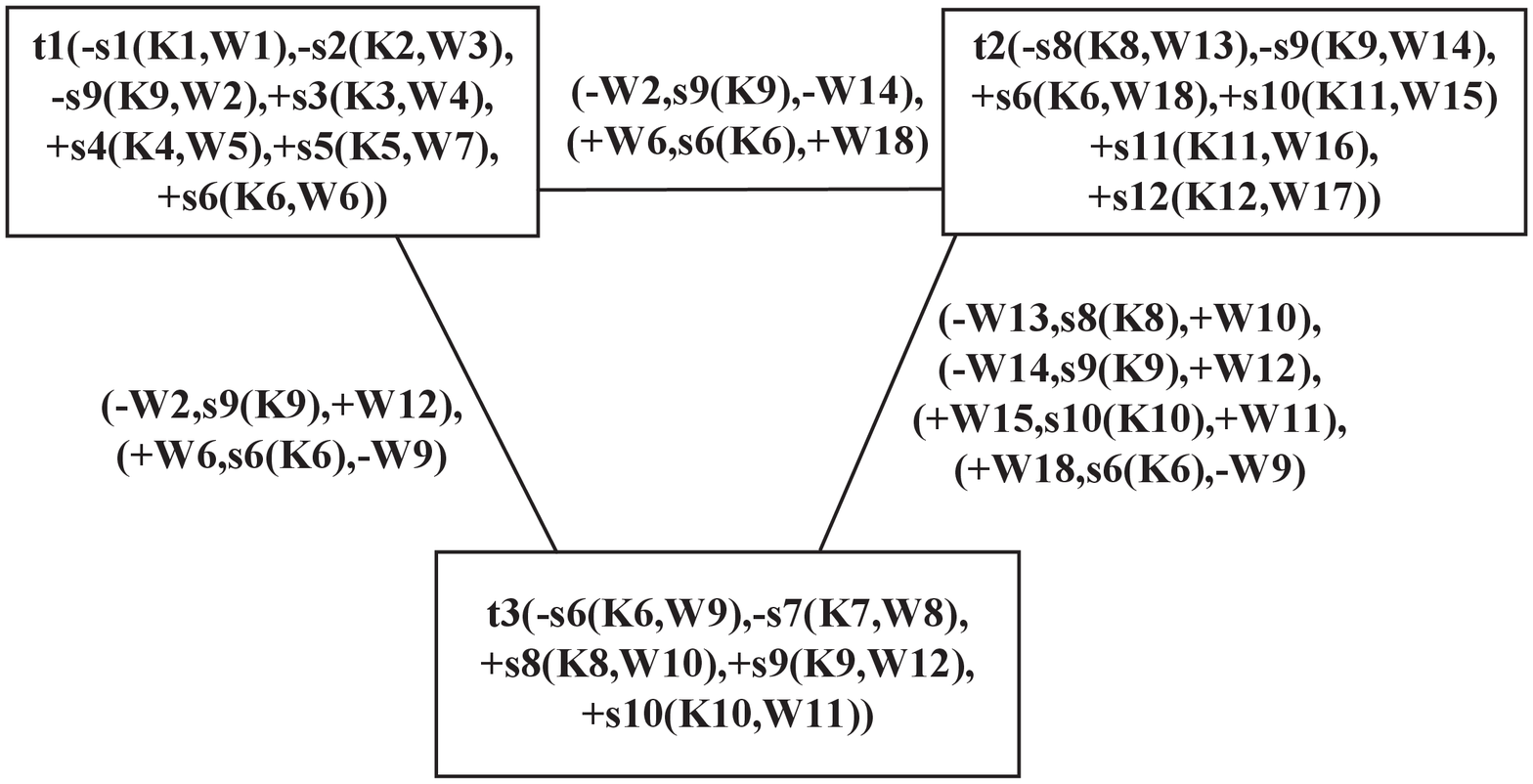}}

	\end{minipage}
	
	\caption{P/T net and its corresponding net graph}
	
\end{figure}

\begin{itemize}
	\item S\_graphs \cite{wu_introduction_2006}
\end{itemize}

For the moment we denote a C/E net with $(S,T;F)$. It is a S\_graph iff $\forall t \in T, |\bullet t|=|t \bullet|=1$. An S\_graph is actually a directed graph and can be processed with frequent subgraph mining algorithms. It can also be processed with our net graph techniques. It is easy to see that a complete subnet of an S\_graph is again an S\_graph. Thus our PSpan algorithm is applicable without any modification.  
\begin{itemize}
	\item Weighted S\_graphs \cite{wu_introduction_2006}
\end{itemize}
S\_graphs where weights are defined on the arcs. A complete subnet of a weighted S-graph is again a weighted S\_graph. PSpan algorithm is applicable without any modification. 

\begin{itemize}
	\item T\_graphs \cite{wu_introduction_2006}
\end{itemize}

A C/E net $(S,T;F)$ is a T\_graphs iff $\forall s \in S, |\bullet s|=|s \bullet|=1$. Like an S\_graph, a T\_graph is also a directed graph. Besides, a complete subnet of a T\_graph is again a T\_graph. Thus our PSpan algorithm is applicable without any modification.   

\begin{itemize}
	\item Weighted T\_graphs \cite{wu_introduction_2006}
\end{itemize}
T\_graphs where weights are defined on the arcs. A complete subnet of a weighted T\_graph is again a weighted T\_graph. PSpan algorithm is applicable without any modification.

\begin{itemize}
	\item Free-choice Petri Nets \cite{hack_analysis_1972}
\end{itemize}

 A C/E net $(S,T;F)$ is a free-choice net iff $\forall t_1,t_2 \in T,t_1 \not = t_2: \bullet t_1 \cap \bullet t_2 \not = \emptyset \to |\bullet t_1|=|\bullet t_2|=1$.  Since any complete subnets of free-choice Petri nets are also free-choice nets, and there is no attached to their nodes or edges, the PSpan algorithm can be used directly.
\begin{itemize}
	\item Occurrence Nets \cite{reisig_lectures_1998}
\end{itemize}

Occurrence nets have been introduced as cycle-free nets with un-branched conditions \cite{reisig_lectures_1998},i.e.,  $\forall s_1,s_2 \in S,|\bullet s_1| \le 1 \wedge |\bullet s_2| \le 1$. Complete subnets of occurrence nets are also occurrence nets. 

An example of S\_graph, T\_graph, free-choice Petri net and occurrence net can be seen in figure ~\ref{16}. Figure \ref{fig:subfig:16a} is a S\_graph, not a T\_graph, figure \ref{fig:subfig:16b} is a T\_graph, not a S\_graph, figure \ref{fig:subfig:16a}\ref{fig:subfig:16b}\ref{fig:subfig:16c} are free-choice Petri nets, both \ref{fig:subfig:16b} and \ref{fig:subfig:16c} are occurrence nets, \ref{fig:subfig:16d} is not any more of S\_graph, T\_graph, free-choice Petri net and occurrence net.

\begin{figure}[htbp]
	
	\begin{minipage}[c]{.24\linewidth}
		\centering
		\subfigure[]{
			\label{fig:subfig:16a}
			\includegraphics[width=0.5\textwidth]{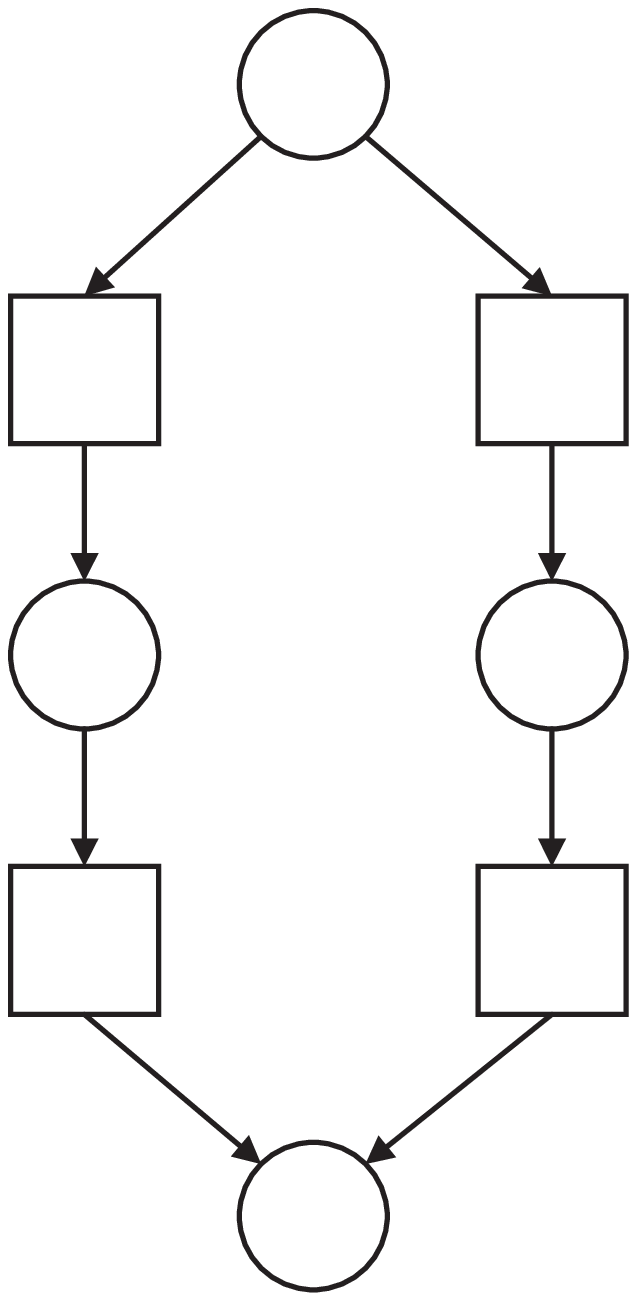}}
	\end{minipage}
	\begin{minipage}[c]{.24\linewidth}
		\centering
		\subfigure[]{ 
			\label{fig:subfig:16b} 
			\includegraphics[width=0.5\textwidth]{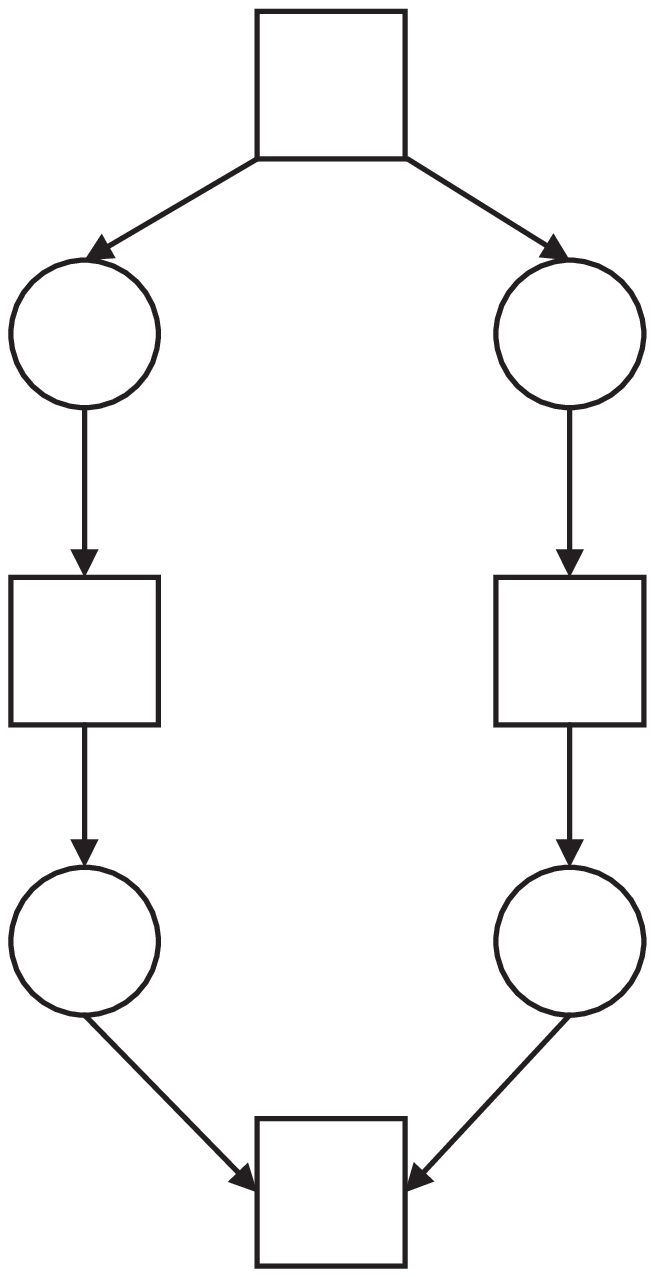}}

	\end{minipage}
	\begin{minipage}[c]{.24\linewidth}
		\centering
		\subfigure[]{
			\label{fig:subfig:16c} 
			\includegraphics[width=0.5\textwidth]{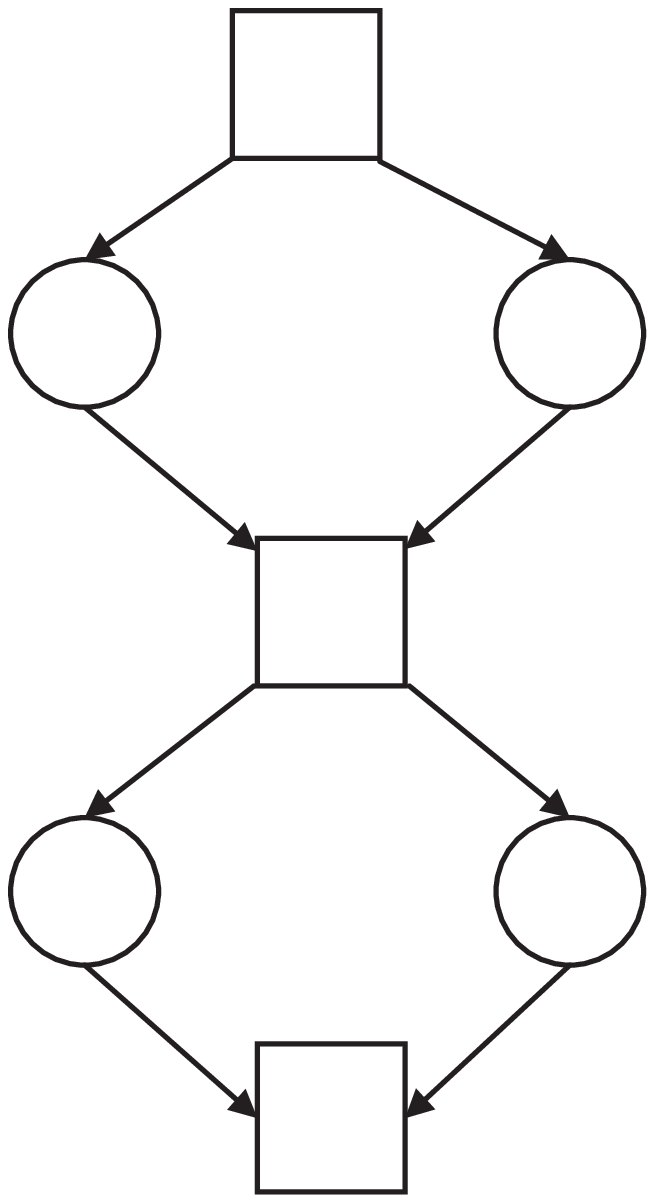}}

	\end{minipage}
	\begin{minipage}[c]{.24\linewidth}
		\centering
		\subfigure[]{
			\label{fig:subfig:16d} 
			\includegraphics[width=0.5\textwidth]{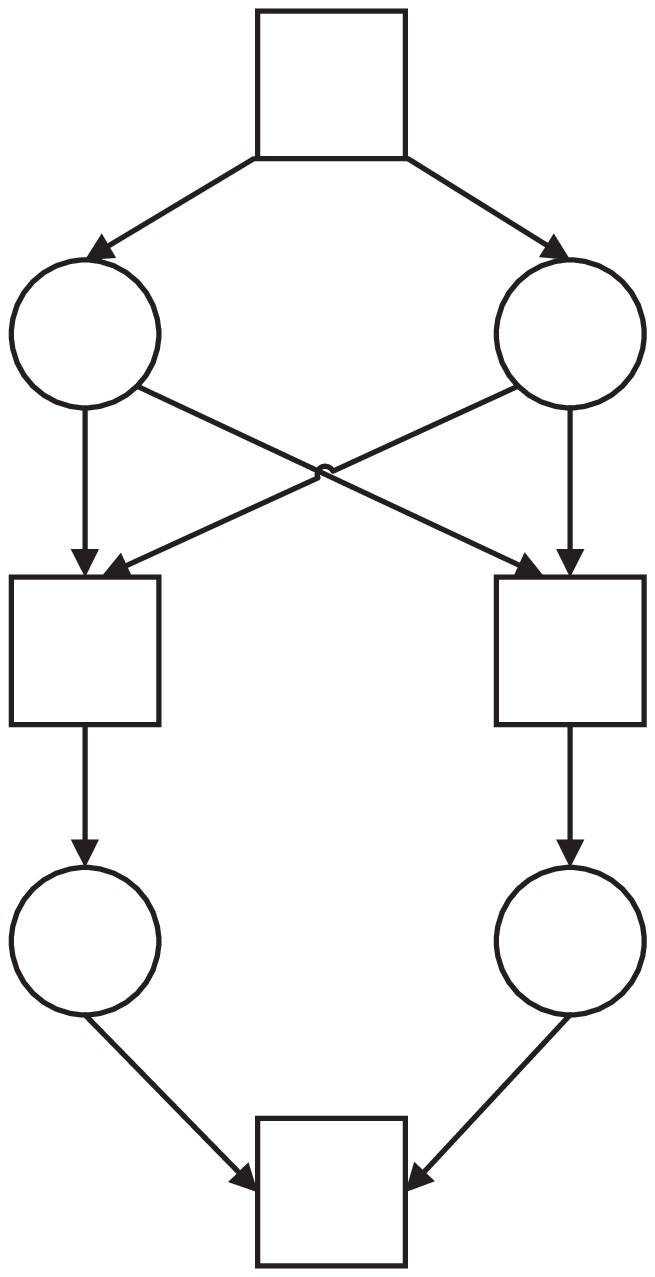}}

	\end{minipage}
	\caption{S\_graph, T\_graph, free-choice net, occurrence net }
	\label{16}
\end{figure}
\begin{itemize}
	\item Petri Nets with Inhibitor Arcs 
\end{itemize}
The addition of inhibitor arcs is a significant extension of Petri net \cite{peterson_petri_1981}. Agerwala has shown that Petri nets extended in this manner are equivalence to the Turing machine \cite{agerwala_complete_1974}. Petri nets with inhibitor arcs can be represented as ($S,T;F,I$), where ($S,T;F$) is a net, and $I$ is a set of inhibitor arcs.  $\forall x \in F, x \not \in I$. The effect of an inhibitor arc is opposite to that of the arcs in $F$. While an arc of $F$ leading from an established condition to an event means allowing it to occur, an inhibitor arc leading from an established condition to an event means probiting this event to occur. 

Since there exist two different kinds of arcs, the edge tagging in net graphs should be modified (i.e., separate the inhibitor arcs from the ordinary arcs). In each triple of edge tagging, we use double sign “$--$” mean the input condition is an inhibitor's input/output conditions. Figure ~\ref{inhibitor} shows an example of Petri net with inhibitor arcs and the corresponding net graph. While an arc of F is represented with an arrow head $'\to'$, an inhibitor arc of $I$ is represented with arrow circle '-o'.

\begin{figure}[H]
	
	\begin{minipage}[c]{0.45\linewidth}
		\centering
		\subfigure[A Petri net with inhibitor arcs]{
			\label{inhibitornet}
			\includegraphics[width=\textwidth]{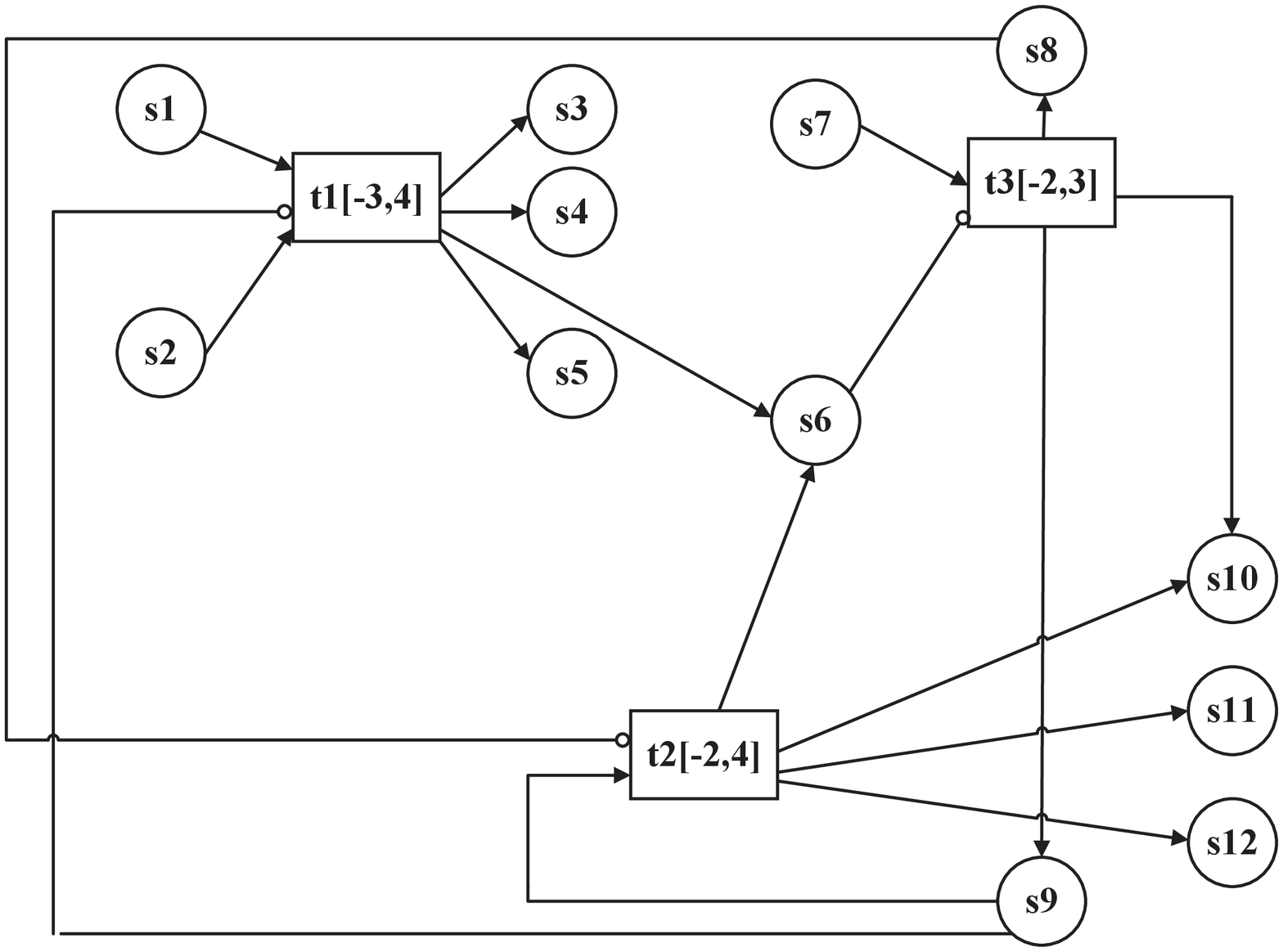}}
	\end{minipage}
	\begin{minipage}[c]{0.45\linewidth}
		\centering
		\subfigure[The correponding net graph]{
			\label{inhibitorng} 
			\includegraphics[width=\textwidth]{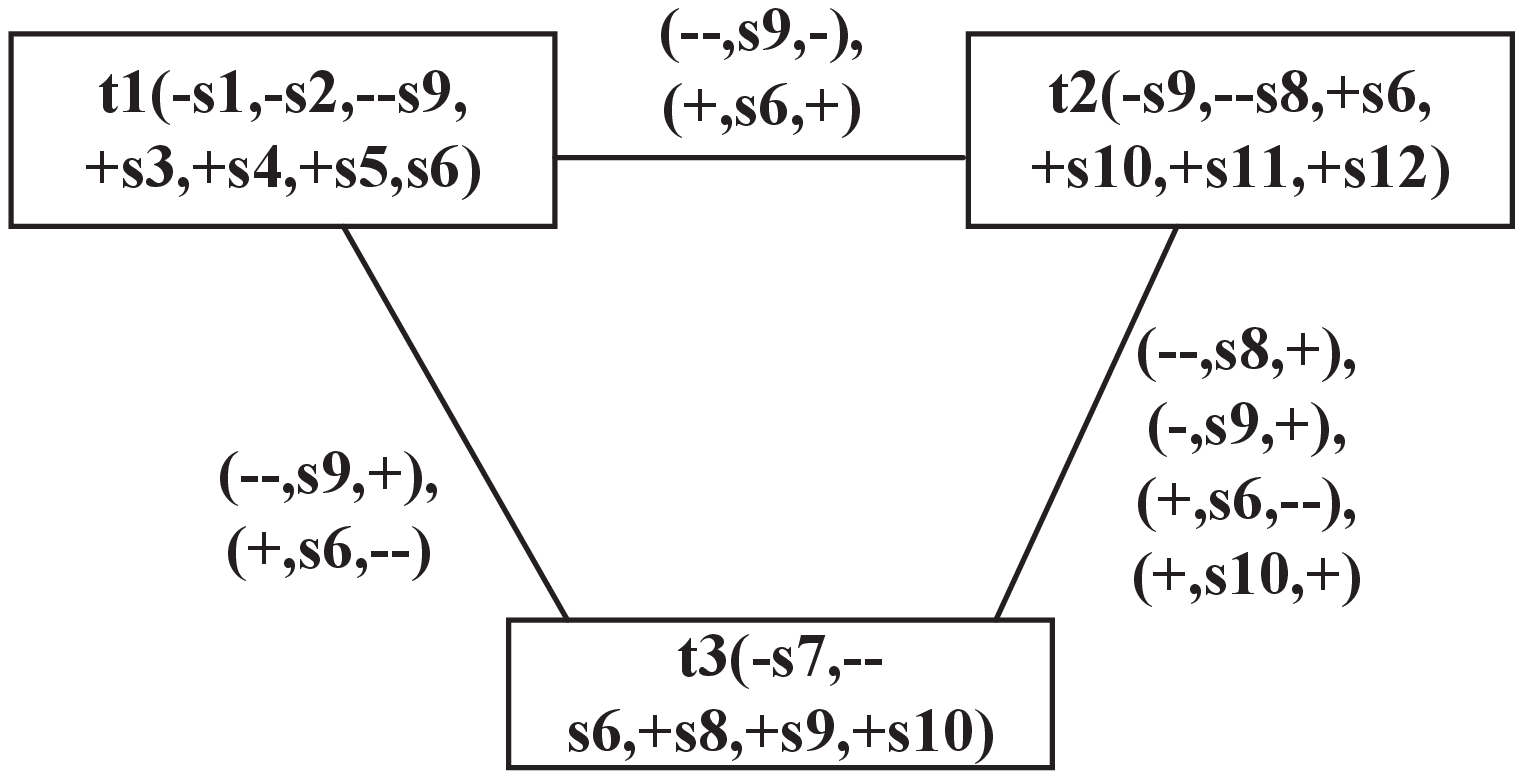}}

	\end{minipage}
	
	\caption{An example of Petri net with inhibitor arcs and the net graph}
	\label{inhibitor}
\end{figure}
\begin{itemize}
	\item Petri Nets with tokens
\end{itemize}

Petri nets with tokens are usually called Petri net systems. However we don’t use this terminology here, because true Petri net systems allow the tokens flowing through the whole net. In this sense the locality of a subnet is lost. Everything becomes global. The problem of frequent subnet mining doesn’t exist anymore. This is why we did not discuss frequent Petri net systems mining in this paper. 

\section{Conculsion} \label{sec:Conculsion}

The key contributions of this paper are summarized as follows.
\begin{itemize}
	\item We introduced the concept of complete subnets and take it as the basis of sub-Petri net mining. In this way we distinguished frequent Petri net mining from frequent subgraph mining in a theoretically strict way. 
	\item We introduced a new data structure called net graph to represent a pure C/E net in a pseudo-graph form in order to reduce the complexity of net structure analysis drastically, even exponentially. We proved that there is a bijective mapping between a pure C/E net and a net graph which is structure preserving. 
	\item We presented an algorithm PSpan1 for mining frequent complete subnets from pure C/E nets based on net graph representation. To our best knowledge, this is the first algorithm that can discover frequent subnets from a large set of Petri nets.
	\item We presented another algorithm PSpan2 based on a pseudo-graph representation which makes a direct use of gSpan strategy and serves as a baseline algorithm for PSpan1.
	\item A complexity analysis showed that PSpan1, PSpan2 and gSpan have roughly the same complexity, while an ideal experiment showed that DSpan would be exponentially more complex than PSpan where DSpan is a frequent subnet miner working directly on C/E net representation.
	\item We implemented an effective method for randomly generating large-scale test sets of connected pure C/E nets at thousands scale. We used net planting method to check the correctness and completeness of data mining. Results of experiments also confirmed our complexity analysis.
	\item We extended the above results to eight other subclasses of Petri nets and showed that our PSpan algorithm can be applied to all these eight subclasses of Petri nets with only a minor modification. 
\end{itemize}

\bibliography{review}

\end{document}